\patchcmd{\@algocf@start}{%
  \begin{lrbox}{\algocf@algobox}%
}{%
  \rule{0.\textwidth}{\z@}%
  \begin{lrbox}{\algocf@algobox}%
  \begin{minipage}{0.52\textwidth}%
}{}{}
\patchcmd{\@algocf@finish}{%
  \end{lrbox}%
}{%
  \end{minipage}%
  \end{lrbox}%
}{}{}
\setlist[itemize,1]{leftmargin=2em}
\newtheorem{theorem}{Theorem}
\begin{document}
 
\title{Knowledge-aware Graph Neural Networks with Label Smoothness Regularization for Recommender Systems}


\author{Hongwei Wang}
\affiliation{Stanford University}
\email{hongweiw@cs.stanford.edu}

\author{Fuzheng Zhang}
\affiliation{Meituan-Dianping Group}
\email{zhangfuzheng@meituan.com}

\author{Mengdi Zhang}
\affiliation{Meituan-Dianping Group}
\email{zhangmengdi02@meituan.com}

\author{Jure Leskovec}
\affiliation{Stanford University}
\email{jure@cs.stanford.edu}

\author{Miao Zhao, Wenjie Li}
\affiliation{Hong Kong Polytechnic University}
\email{{csmiaozhao,cswjli}@comp.polyu.edu.hk}

\author{Zhongyuan Wang}
\affiliation{Meituan-Dianping Group}
\email{wangzhongyuan02@meituan.com}

\begin{abstract}
	Knowledge graphs capture structured information and relations between a set of entities or items.
	As such knowledge graphs represent an attractive source of information that could help improve recommender systems.
	However, existing approaches in this domain rely on manual feature engineering and do not allow for an end-to-end training.
	Here we propose \textit{Knowledge-aware Graph Neural Networks with Label Smoothness regularization} (KGNN-LS) to provide better recommendations.
	Conceptually, our approach computes user-specific item embeddings by first applying a trainable function that identifies important knowledge graph relationships for a given user. 
	This way we transform the knowledge graph into a user-specific weighted graph and then apply a graph neural network to compute personalized item embeddings.
	To provide better inductive bias, we rely on \textit{label smoothness} assumption, which posits that adjacent items in the knowledge graph are likely to have similar user relevance labels/scores.
	Label smoothness provides regularization over the edge weights and we prove that it is equivalent to a label propagation scheme on a graph.
	We also develop an efficient implementation that shows strong scalability with respect to the knowledge graph size.
	Experiments on four datasets show that our method outperforms state of the art baselines.
	KGNN-LS also achieves strong performance in cold-start scenarios where user-item interactions are sparse.
\end{abstract}

\keywords{Knowledge-aware recommendation; graph neural networks; label propagation}

\maketitle

\section{Introduction}
    Recommender systems are widely used in Internet applications to meet user's personalized interests and alleviate the information overload \cite{covington2016deep, wang2018billion, ying2018graph}.
	Traditional recommender systems that are based on collaborative filtering \cite{koren2009matrix,wang2017joint} usually suffer from the cold-start problem and have trouble recommending brand new items that have not yet been heavily explored by the users.
	The sparsity issue can be addressed by introducing additional sources of information such as user/item profiles \cite{wang2018shine} or social networks \cite{wang2017joint}.
	
	Knowledge graphs (KGs) capture structured information and relations between a set of entities \cite{zhang2016collaborative,wang2018dkn,huang2018improving,yu2014personalized,zhao2017meta,hu2018leveraging,wang2018ripple,sun2018recurrent,wang2019multi,wang2019knowledge,wang2019exploring}.
	KGs are heterogeneous graphs in which nodes correspond to \textit{entities} (e.g., items or products, as well as their properties and characteristics) and edges correspond to \textit{relations}.	KGs provide connectivity information between items via different types of relations and thus capture semantic relatedness between the items.
	
	The core challenge in utilizing KGs in recommender systems is to learn how to capture \textit{user-specific} item-item relatedness captured by the KG.
	Existing KG-aware recommender systems can be classified into path-based methods \cite{yu2014personalized,zhao2017meta,hu2018leveraging}, embedding-based methods \cite{zhang2016collaborative,wang2018dkn,huang2018improving,wang2019multi}, and hybrid methods \cite{wang2018ripple,sun2018recurrent,wang2019knowledge}.
	However, these approaches rely on manual feature engineering, are unable to perform end-to-end training, and have poor scalability.
	Graph Neural Networks (GNNs), which aggregate node feature information from node's local network neighborhood using neural networks, represent a promising advancement in graph-based representation learning \cite{bruna2014spectral,defferrard2016convolutional,kipf2017semi,duvenaud2015convolutional,niepert2016learning,hamilton2017inductive}.
	Recently, several works developed GNNs architecture for recommender systems \cite{ying2018graph,monti2017geometric,van2017graph,wu2018graph,wang2019knowledge}, but these approaches are mostly designed for \textit{homogeneous} bipartite user-item interaction graphs or user-/item-similarity graphs.
	It remains an open question how to extend GNNs architecture to \textit{heterogeneous} knowledge graphs.

	In this paper, we develop \textit{Knowledge-aware Graph Neural Networks with Label Smoothness regularization} (KGNN-LS) that extends GNNs architecture to knowledge graphs to simultaneously capture semantic relationships between the items as well as personalized user preferences and interests.
	To account for the relational heterogeneity in KGs, similar to \cite{wang2019knowledge}, we use a trainable and personalized relation scoring function that transforms the KG into a user-specific weighted graph, which characterizes both the semantic information of the KG as well as user's personalized interests.
	For example, in the movie recommendation setting the relation scoring function could learn that a given user really cares about ``director'' relation between movies and persons, while somebody else may care more about the ``lead actor'' relation.
	Using this personalized weighted graph, we then apply a graph neural network that for every item node computes its embedding by aggregating node feature information over the local network neighborhood of the item node.
	This way the embedding of each item captures it's local KG structure in a user-personalized way.
	
	A significant difference between our approach and traditional GNNs is that the edge weights in the graph are not given as input.
	We set them using user-specific relation scoring function that is trained in a supervised fashion.
	However, the added flexibility of edge weights makes the learning process prone to overfitting, since the only source of supervised signal for the relation scoring function is coming from user-item interactions (which are sparse in general).
	To remedy this problem, we develop a technique for regularization of edge weights during the learning process, which leads to better generalization.
	We develop an approach based on \textit{label smoothness} \cite{zhu2003semi,zhang2007hyperparameter}, which assumes that adjacent entities in the KG are likely to have similar user relevancy labels/scores.
	In our context this assumption means that users tend to have similar preferences to items that are nearby in the KG.
	We prove that label smoothness regularization is equivalent to \textit{label propagation} and we design a \textit{leave-one-out} loss function for label propagation to provide extra supervised signal for learning the edge scoring function.
	We show that the knowledge-aware graph neural networks and label smoothness regularization can be unified under the same framework, where label smoothness can be seen as a natural choice of regularization on knowledge-aware graph neural networks.
	
	We apply the proposed method to four real-world datasets of movie, book, music, and restaurant recommendations, in which the first three datasets are public datasets and the last is from Meituan-Dianping Group.
	Experiments show that our method achieves significant gains over state-of-the-art methods in recommendation accuracy.
	We also show that our method maintains strong recommendation performance in the cold-start scenarios where user-item interactions are sparse.

\section{Related Work}
	
	\subsection{Graph Neural Networks}
	\label{sec:gcn_rs}
		Graph Neural Networks (or Graph Convolutional Neural Networks, GCNs) aim to generalize convolutional neural networks to non-Euclidean domains (such as graphs) for robust feature learning.
		Bruna et al. \cite{bruna2014spectral} define the convolution in Fourier domain and calculate the eigendecomposition of the graph Laplacian,
		Defferrard et al. \cite{defferrard2016convolutional} approximate the convolutional filters by Chebyshev expansion of the graph Laplacian,
		and Kipf et al. \cite{kipf2017semi} propose a convolutional architecture via a first-order approximation.
		In contrast to these \textit{spectral} GCNs, \textit{non-spectral} GCNs operate on the graph directly and apply ``convolution'' (i.e., weighted average) to local neighbors of a node \cite{duvenaud2015convolutional,niepert2016learning,hamilton2017inductive}.
		
		Recently, researchers also deployed GNNs in recommender systems: PinSage \cite{ying2018graph} applies GNNs to the pin-board bipartite graph in Pinterest.
		Monti et al. \cite{monti2017geometric} and Berg et al. \cite{van2017graph} model recommender systems as matrix completion and design GNNs for representation learning on user-item bipartite graphs.
		Wu et al. \cite{wu2018graph} use GNNs on user/item structure graphs to learn user/item representations.
		The difference between these works and ours is that they are all designed for homogeneous bipartite graphs or user/item-similarity graphs where GNNs can be used directly, while here we investigate GNNs for heterogeneous KGs.
		Wang et al. \cite{wang2019knowledge} use GCNs in KGs for recommendation, but simply applying GCNs to KGs without proper regularization is prone to overfitting and leads to performance degradation as we will show later.
		Schlichtkrull et al. also propose using GNNs to model KGs \cite{schlichtkrull2018modeling}, but not for the purpose of recommendations.

	\subsection{Semi-supervised Learning on Graphs}
		The goal of graph-based semi-supervised learning is to correctly label all nodes in a graph given that only a few nodes are labeled.
		Prior work often makes assumptions on the distribution of labels over the graph, and one common assumption is smooth variation of labels of nodes across the graph.
		Based on different settings of edge weights in the input graph, these methods are classified as:
		(1) Edge weights are assumed to be given as input and therefore fixed \cite{zhu2003semi,zhou2004learning,baluja2008video};
		(2) Edge weights are parameterized and therefore learnable \cite{zhang2007hyperparameter,wang2008label,karasuyama2013manifold}.
		Inspired by these methods, we design a module of label smoothness regularization in our proposed model.
		The major distinction of our work is that the label smoothness constraint is not used for semi-supervised learning on graphs, but serves as regularization to assist the learning of edge weights and achieves better generalization for recommender systems.
		
	\begin{figure*}[t]
		\centering
		\includegraphics[width=0.95\textwidth]{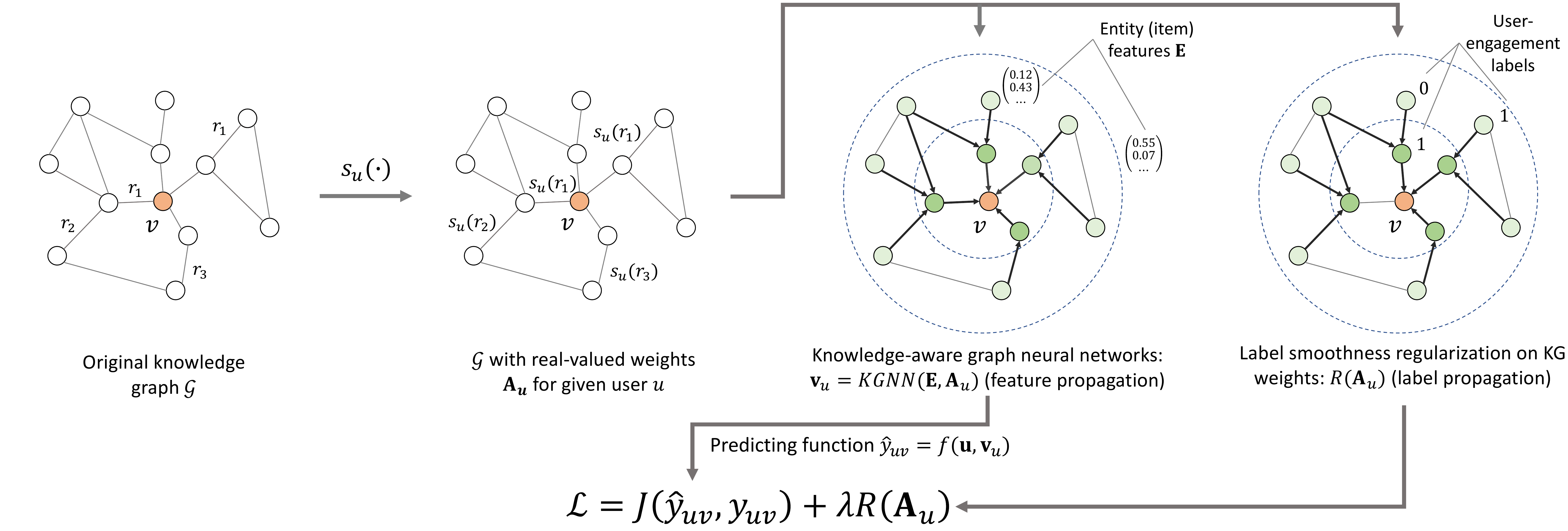}
		\caption{Overview of our proposed KGNN-LS model. The original KG is first transformed into a user-specific weighted graph, on which we then perform feature propagation using a graph neural network with the label smoothness regularization. The two modules constitute the complete loss function $\mathcal L$.}
		\label{fig:framework}
	\end{figure*}

	\subsection{Recommendations with Knowledge Graphs}
	\label{sec:kg_rs}
		In general, existing KG-aware recommender systems can be classified into three categories:
		(1) \textit{Embedding-based methods} \cite{zhang2016collaborative,wang2018dkn,huang2018improving,wang2019multi} pre-process a KG with \textit{knowledge graph embedding} (KGE) \cite{wang2017knowledge} algorithms, then incorporate learned entity embeddings into recommendation.
		Embedding-based methods are highly flexible in utilizing KGs to assist recommender systems, but the KGE algorithms focus more on modeling rigorous semantic relatedness (e.g., TransE \cite{bordes2013translating} assumes $head + relation = tail$), which are more suitable for graph applications such as link prediction rather than recommendations.
		In addition, embedding-based methods usually lack an end-to-end way of training.
		(2) \textit{Path-based methods} \cite{yu2014personalized,zhao2017meta,hu2018leveraging} explore various patterns of connections among items in a KG (a.k.a meta-path or meta-graph) to provide additional guidance for recommendations.
		Path-based methods make use of KGs in a more intuitive way, but they rely heavily on manually designed meta-paths/meta-graphs, which are hard to tune in practice.
		(3) \textit{Hybrid methods} \cite{wang2018ripple,sun2018recurrent,wang2019knowledge} combine the above two categories and learn user/item embeddings by exploiting the structure of KGs.
		Our proposed model can be seen as an instance of hybrid methods.

\section{Problem Formulation}
	\label{section:pf}
		
	We begin by describing the KG-aware recommendations problem and introducing notation.
	In a typical recommendation scenario, we have a set of users $\mathcal U$ and a set of items $\mathcal V$.
	The user-item interaction matrix ${\bf Y}$ is defined according to users' implicit feedback, where $y_{uv} = 1$ indicates that user $u$ has engaged with item $v$, such as clicking, watching, or purchasing.
	We also have a knowledge graph $\mathcal G = \{(h, r, t)\}$ available, in which $h \in \mathcal E$, $r \in \mathcal R$, and $t \in \mathcal E$ denote the head, relation, and tail of a knowledge triple, $\mathcal E$ and $\mathcal R$ are the set of entities and relations in the knowledge graph, respectively.
	For example, the triple (\textit{The Silence of the Lambs}, \textit{film.film.star}, \textit{Anthony Hopkins}) states the fact that Anthony Hopkins is the leading actor in film ``The Silence of the Lambs''.
	In many recommendation scenarios, an item $v \in \mathcal V$ corresponds to an entity $e \in \mathcal E$ (e.g., item ``The Silence of the Lambs'' in MovieLens also appears in the knowledge graph as an entity).
	The set of entities $\mathcal E$ is composed from items $\mathcal V$ ($\mathcal V \subseteq \mathcal E$) as well as non-items $\mathcal E \backslash \mathcal V$ (e.g. nodes corresponding to item/product properties).
	Given user-item interaction matrix $\bf Y$ and knowledge graph $\mathcal G$, our task is to predict whether user $u$ has potential interest in item $v$ with which he/she has not engaged before.
	Specifically, we aim to learn a prediction function ${\hat y}_{uv} = \mathcal F(u, v | \Theta, \bf Y, \mathcal G)$, where ${\hat y}_{uv}$ denotes the probability that user $u$ will engage with item $v$, and $\Theta$ are model parameters of function $\mathcal F$.
	
	We list the key symbols used in this paper in Table \ref{table:notation}.
	
	\begin{table}[b]
		\vspace{-0.1in}
		\centering
		\setlength{\tabcolsep}{4pt}
		\begin{tabular}{c|c}
			\hline
			Symbol & Meaning \\
			\hline
			$\mathcal U = \{u_1, \cdots\}$ & Set of users \\
			$\mathcal V = \{v_1, \cdots\}$ & Set of items \\
			${\bf Y}$ & User-item interaction matrix \\
			$\mathcal G = (\mathcal E, \mathcal R)$ & Knowledge graph \\
			$\mathcal E = \{e_1, \cdots\}$ & Set of entities \\
			$\mathcal R = \{r_1, \cdots\}$ & Set of relations \\
			$\mathcal E \backslash \mathcal V$ & Set of non-item entities \\
			$s_u(r)$ & User-specific relation scoring function \\
			${\bf A}_u$ & Adjacency matrix of $\mathcal G$ w.r.t. user $u$ \\
			${\bf D}_u$ & Diagonal degree matrix of ${\bf A}_u$ \\
			${\bf E}$ & Raw entity feature \\
			${\bf H}^{(l)}, \ l = 0, ..., L-1$ & Entity representation in the $l$-th layer \\
			${\bf W}^{(l)}, \ l = 0, ..., L-1$ & Transformation matrix in the $l$-th layer \\
			$l_u(e), \ e \in \mathcal E$ & Item relevancy labeling function \\
			$l_u^*(e), \ e \in \mathcal E$ & Minimum-energy labeling function \\
			$\hat l_u(v), \ v \in \mathcal V$ & Predicted relevancy label for item $v$ \\
			$R({\bf A}_u)$ & Label smoothness regularization on ${\bf A}_u$ \\
			\hline
		\end{tabular}
		\vspace{0.05in}
		\caption{List of key symbols.}
		\label{table:notation}
		\vspace{-0.15in}
	\end{table}

\section{Our Approach}
	In this section, we first introduce knowledge-aware graph neural networks and label smoothness regularization, respectively, then we present the unified model.
	
	\subsection{Preliminaries: Knowledge-aware Graph Neural Networks}
		The first step of our approach is to transform a heterogeneous KG into a user-personalized weighted graph, which characterizes user's preferences.
		To this end, similar to \cite{wang2019knowledge}, we use a user-specific \textit{relation scoring function} $s_u(r)$ that provides the importance of relation $r$ for user $u$: $s_u(r) = g({\bf u}, {\bf r})$, where ${\bf u}$ and ${\bf r}$ are feature vectors of user $u$ and relation type $r$, respectively, and $g$ is a differentiable function such as inner product.
		Intuitively, $s_u(r)$ characterizes the importance of relation $r$ to user $u$.
		For example, a user may be more interested in movies that have the same \textit{director} with the movies he/she watched before, but another user may care more about the \textit{leading actor} of movies.
		
		Given user-specific relation scoring function $s_u(\cdot)$ of user $u$, knowledge graph $\mathcal G$ can therefore be transformed into a user-specific adjacency matrix ${\bf A}_u \in \mathbb R^{|\mathcal E| \times |\mathcal E|}$, in which the $(i, j)$-entry $A_u^{ij} = s_u(r_{e_i, e_j})$, and $r_{e_i, e_j}$ is the relation between entities $e_i$ and $e_j$ in $\mathcal G$.\footnote{In this work we treat $\mathcal G$ an undirected graph, so ${\bf A}_u$ is a symmetric matrix. If both triples $(h, r_1, t)$ and $(t, r_2, h)$ exist, we only consider one of $r_1$ and $r_2$. This is due to the fact that: (1) $r_1$ and $r_2$ are the inverse of each other and semantically related; (2) Treating ${\bf A}_u$ symmetric will greatly increase the matrix density.}
		$A_u^{ij}=0$ if there is no relation between $e_i$ and $e_j$.
		See the left two subfigures in Figure \ref{fig:framework} for illustration.
		We also denote the raw feature matrix of entities as ${\bf E} \in \mathbb R^{|\mathcal E| \times d_0}$, where $d_0$ is the dimension of raw entity features.
		Then we use multiple feed forward layers to update the entity representation matrix by aggregating representations of neighboring entities.
		Specifically, the layer-wise forward propagation can be expressed as
		\begin{equation}
		\label{eq:kgcn}
			{\bf H}_{l+1} = \sigma \left({\bf D}_u^{-1/2} {\bf A}_u {\bf D}_u^{-1/2} {\bf H}_l {\bf W}_l \right), \ l = 0, 1, \cdots, L-1.
		\end{equation}
		In Eq. (\ref{eq:kgcn}), ${\bf H}_l$ is the matrix of hidden representations of entities in layer $l$, and ${\bf H}_0 = {\bf E}$.
		${\bf A}_u$ is to aggregate representation vectors of neighboring entities.
		In this paper, we set ${\bf A}_u \leftarrow {\bf A}_u + \bf{I}$, i.e., adding self-connection to each entity, to ensure that old representation vector of the entity itself is taken into consideration when updating entity representations.
		${\bf D}_u$ is a diagonal degree matrix with entries $D_u^{ii} = \sum_j A_{u}^{ij}$, therefore, ${\bf D}_u^{-1/2}$ is used to normalize ${\bf A}_u$ and keep the entity representation matrix ${\bf H}_l$ stable.
		 ${\bf W}_l \in \mathbb R^{d_l \times d_{l+1}}$ is the layer-specific trainable weight matrix, $\sigma$ is a non-linear activation function, and $L$ is the number of layers.
		 
		A single GNN layer computes the representation of an entity via a transformed mixture of itself and its immediate neighbors in the KG.
		We can therefore naturally extend the model to multiple layers to explore users' potential interests in a broader and deeper way.
		The final output is ${\bf H}_L \in \mathbb R^{|\mathcal E| \times d_L}$, which is the entity representations that mix the initial features of themselves and their neighbors up to $L$ hops away.
		Finally, the predicted engagement probability of user $u$ with item $v$ is calculated by $\hat y_{uv} = f({\bf u}, {\bf v}_u)$, where ${\bf v}_u$ (i.e., the $v$-th row of ${\bf H}_L$) is the final representation vector of item $v$, and $f$ is a differentiable prediction function, for example, inner product or a multilayer perceptron.
		Note that ${\bf v}_u$ is user-specific since the adjacency matrix ${\bf A}_u$ is user-specific.
		Furthermore, note that the system is end-to-end trainable where the gradients flow from $f(\cdot)$ via GNN (parameter matrix $\bf W$) to $g(\cdot)$ and eventually to representations of users $u$ and items $v$.

	\begin{figure*}
			\centering
			\begin{subfigure}[b]{0.9\textwidth}
   				\includegraphics[width=\textwidth]{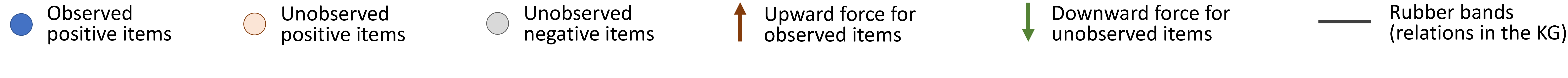}
			\end{subfigure}
			\hfill
			\begin{subfigure}[b]{0.19\textwidth}
   				\includegraphics[width=\textwidth]{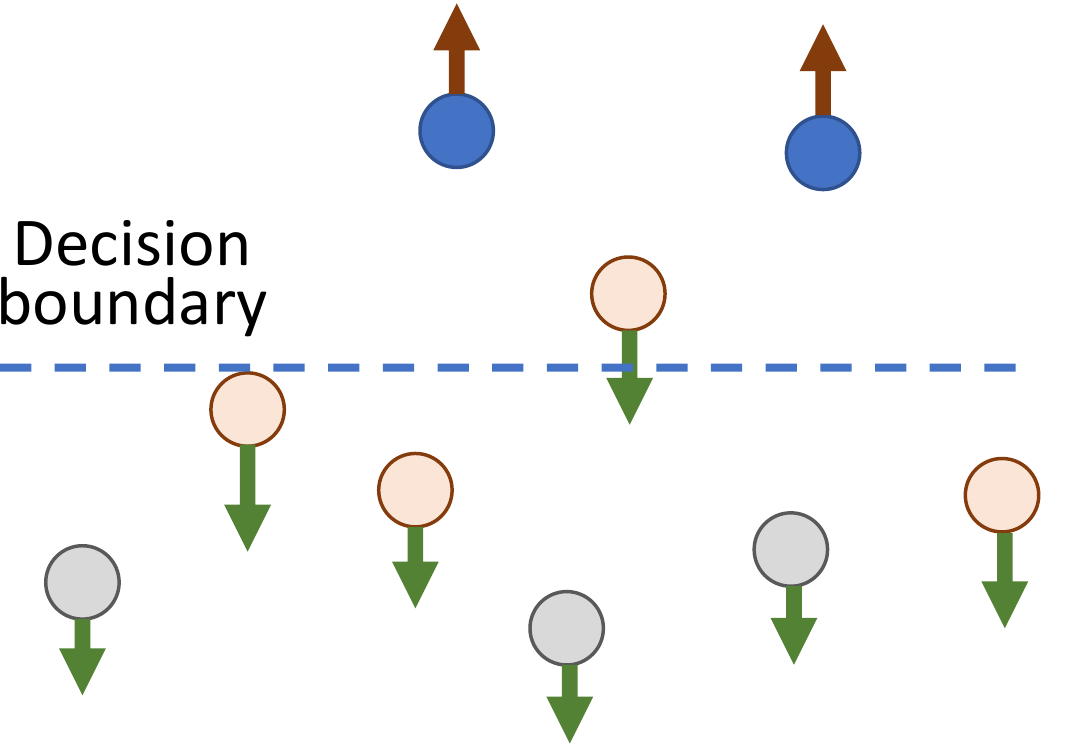}
   				\caption{Without the KG}
   				\label{fig:d1}
			\end{subfigure}
			\hfill
			\begin{subfigure}[b]{0.19\textwidth}
				\includegraphics[width=\textwidth]{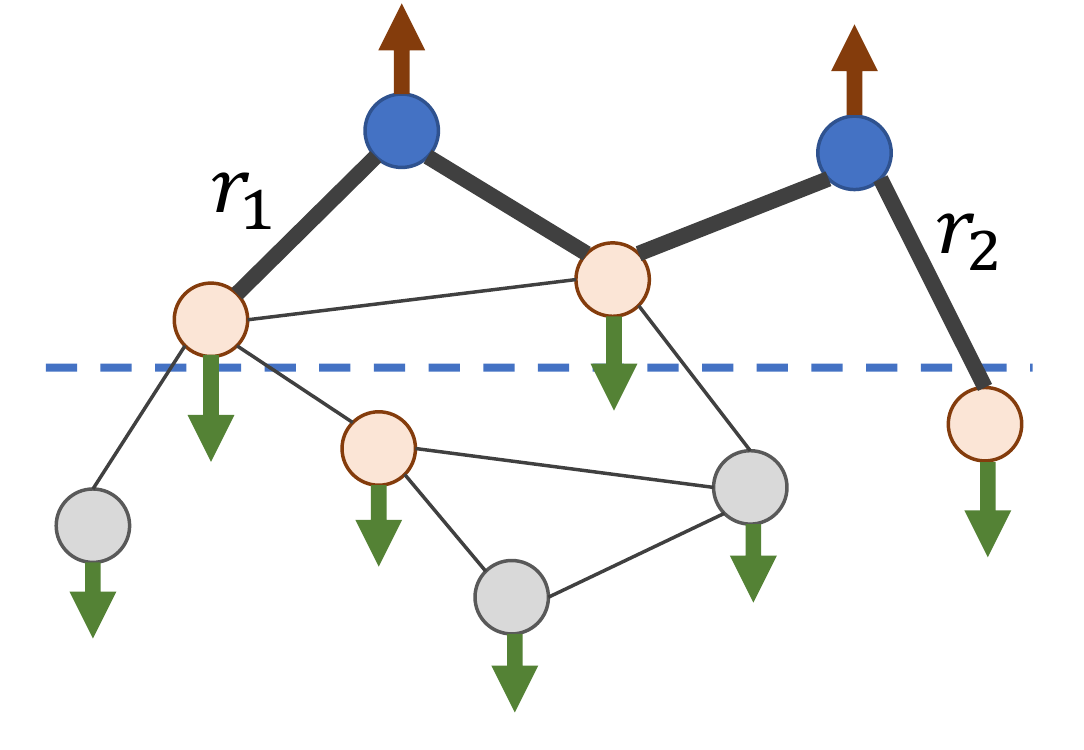}
				\caption{$L = 1$}
				\label{fig:d2}
			\end{subfigure}
			\hfill
			\begin{subfigure}[b]{0.19\textwidth}
   				\includegraphics[width=\textwidth]{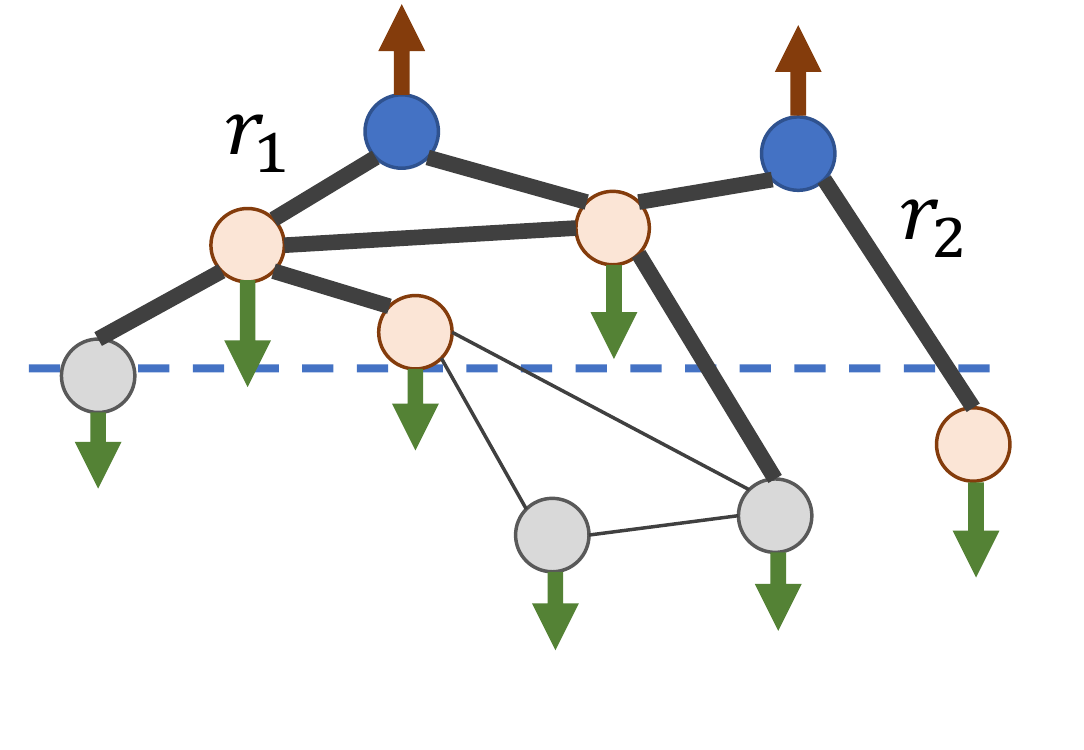}
   				\caption{$L = 2$}
   				\label{fig:d3}
			\end{subfigure}
			\hfill
			\begin{subfigure}[b]{0.19\textwidth}
				\includegraphics[width=\textwidth]{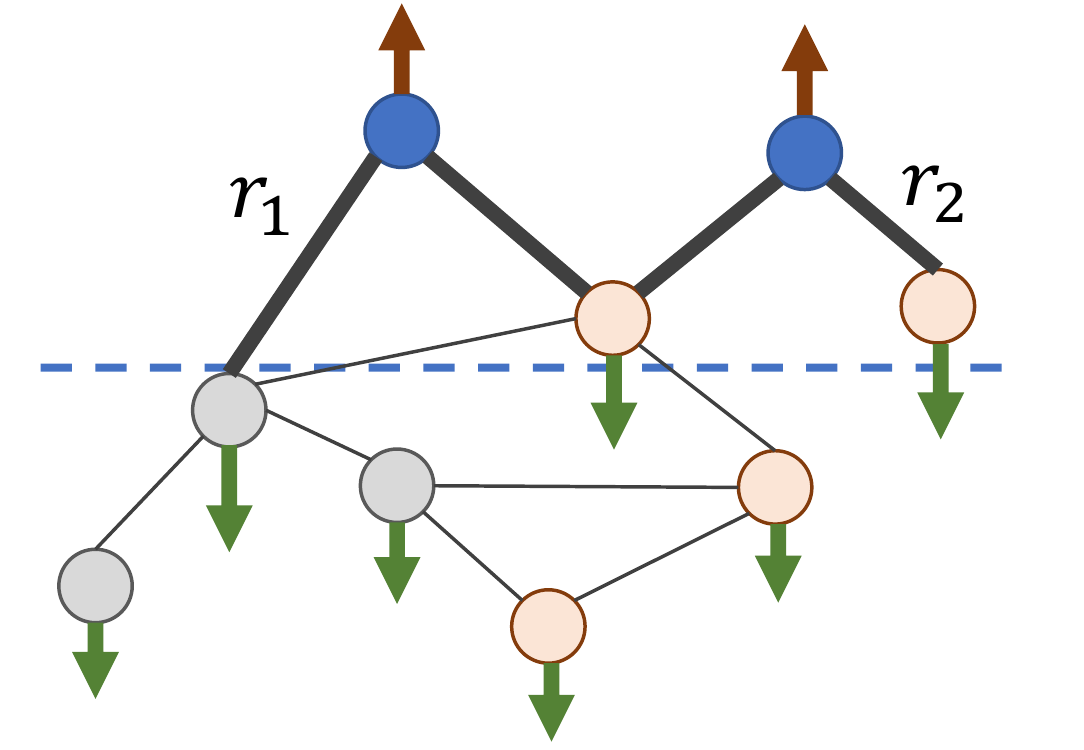}
				\caption{$L = 1$ for another user}
				\label{fig:d4}
			\end{subfigure}
			\hfill
			\begin{subfigure}[b]{0.19\textwidth}
				\includegraphics[width=\textwidth]{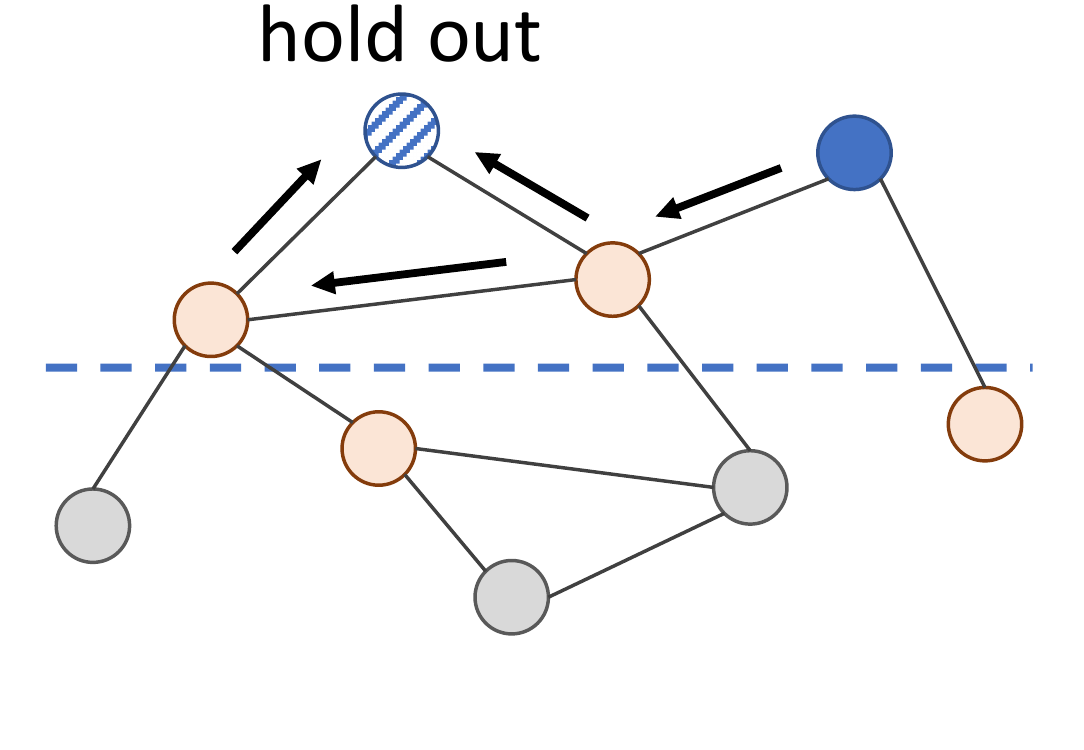}
				\caption{LS regularization}
				\label{fig:d5}
			\end{subfigure}
			\caption{(a) Analogy of a physical equilibrium model for recommender systems; (b)-(d) Illustration of the effect of the KG; (e) Illustration of the effect of label smoothness regularization.}
			\label{fig:discussion}
		\end{figure*}
		
	\subsection{Label Smoothness Regularization}
		It is worth noticing a significant difference between our model and GNNs: In traditional GNNs, edge weights of the input graph are fixed; but in our model, edge weights ${\bf D}_u^{-1/2} {\bf A}_u {\bf D}_u^{-1/2}$ in Eq. (\ref{eq:kgcn}) are learnable (including possible parameters of function $g$ and feature vectors of users and relations) and also requires supervised training like $\bf W$.
		Though enhancing the fitting ability of the model, this will inevitably make the optimization process prone to overfitting, since the only source of supervised signal is from user-item interactions outside GNN layers.
		Moreover, edge weights do play an essential role in representation learning on graphs, as highlighted by a large amount of prior works \cite{zhu2003semi,zhang2007hyperparameter,wang2008label,karasuyama2013manifold,velickovic2017graph}.
		Therefore, more regularization on edge weights is needed to assist the learning of entity representations and to help generalize to unobserved interactions more efficiently.
		
		Let's see how an ideal set of edge weights should be like.
		Consider a real-valued label function $l_u: \mathcal E \rightarrow \mathbb R$ on $\mathcal G$, which is constrained to take a specific value $l_u(v) = y_{uv}$ at node $v \in \mathcal V \subseteq \mathcal E$.
		In our context, $l_u(v) = 1$ if user $u$ finds the item $v$ relevant and has engaged with it, otherwise $l_u(v) = 0$.
		Intuitively, we hope that adjacent entities in the KG are likely to have similar relevancy labels, which is known as \textit{label smoothness assumption}.
		This motivates our choice of energy function $E$:
		\begin{equation}
		\label{eq:smoothness}
			E(l_u, {\bf A}_u) = \frac{1}{2} \sum_{e_i \in \mathcal E, e_j \in \mathcal E} A_u^{ij} \left( l_u(e_i) - l_u(e_j) \right)^2.
		\end{equation}
		We show that the minimum-energy label function is \textit{harmonic} by the following theorem:
		
		\begin{theorem}
		\label{thm:1}
			The minimum-energy label function
			\begin{equation}
				l_u^* = \mathop{\arg\min}_{l_u: l_u(v) = y_{uv}, \forall v \in \mathcal V} E(l_u, {\bf A}_u)
			\end{equation}
			w.r.t. Eq. (\ref{eq:smoothness}) is harmonic, i.e., $l_u^*$ satisfies
			\begin{equation}
				l_u^*(e_i) = \frac{1}{D_u^{ii}} \sum_{e_j \in \mathcal E} A_u^{ij} \ l_u^*(e_j), \ \forall e_i \in \mathcal E \backslash \mathcal V.
			\end{equation}
		\end{theorem}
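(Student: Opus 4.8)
The plan is to recognize the energy $E(l_u, {\bf A}_u)$ as a convex quadratic form in the vector of label values and to characterize its constrained minimizer through first-order optimality conditions. First I would rewrite the energy in matrix form: expanding the square in Eq. (\ref{eq:smoothness}) and using the symmetry $A_u^{ij} = A_u^{ji}$ together with the definition $D_u^{ii} = \sum_j A_u^{ij}$ gives $E(l_u, {\bf A}_u) = {\bf l}_u^\top ({\bf D}_u - {\bf A}_u) {\bf l}_u$, where ${\bf l}_u$ collects the values $l_u(e_i)$ and ${\bf D}_u - {\bf A}_u$ is the (unnormalized) graph Laplacian. Since $E$ is, by construction, a nonnegative weighted sum of squares, this Laplacian is positive semidefinite and $E$ is convex; hence any stationary point subject to the affine constraints $l_u(v) = y_{uv}$ on $\mathcal V$ is automatically a global minimizer, so it suffices to find where the gradient over the free coordinates vanishes.

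Next I would split the variables, treating the labels $\{l_u(v) : v \in \mathcal V\}$ as fixed and the labels $\{l_u(e) : e \in \mathcal E \backslash \mathcal V\}$ as the free variables of the optimization. For each free node $e_i \in \mathcal E \backslash \mathcal V$ I would compute $\partial E / \partial l_u(e_i)$. Only the terms of the double sum in which $e_i$ appears as a head or as a tail survive; after invoking the symmetry of ${\bf A}_u$ these two groups of surviving terms coincide, and the computation collapses to $\partial E / \partial l_u(e_i) = 2 \sum_{e_j \in \mathcal E} A_u^{ij} \bigl( l_u(e_i) - l_u(e_j) \bigr)$.

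Setting this derivative to zero for every free node and rearranging yields $l_u(e_i) \sum_{e_j} A_u^{ij} = \sum_{e_j} A_u^{ij} l_u(e_j)$, that is, $l_u^*(e_i) = (1/D_u^{ii}) \sum_{e_j \in \mathcal E} A_u^{ij}\, l_u^*(e_j)$ for all $e_i \in \mathcal E \backslash \mathcal V$, which is precisely the harmonic property in the statement. Because $E$ is convex, these stationarity equations are both necessary and sufficient for the constrained minimum, so the minimum-energy labeling $l_u^*$ is indeed harmonic off the item set.

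The matrix rewriting and the differentiation are routine; the step that most warrants care is the bookkeeping in the partial derivative — ensuring that both the ``$e_i$ as head'' and ``$e_i$ as tail'' contributions are counted and that the symmetry of ${\bf A}_u$ is used correctly so that the factor of two (and hence the correct normalization by $D_u^{ii}$) emerges. A secondary point I would flag is existence and the harmless division by $D_u^{ii}$: the feasible set is nonempty and convexity guarantees a minimizer, while any node with $D_u^{ii}=0$ is isolated and its harmonic equation degenerates to $0=0$, so the characterization is never vacuous.
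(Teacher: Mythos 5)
Your proof follows essentially the same route as the paper's: differentiate the energy with respect to the free (non-item) labels, set the gradient to zero, and rearrange into the averaging identity $l_u^*(e_i) = \frac{1}{D_u^{ii}} \sum_{e_j} A_u^{ij}\, l_u^*(e_j)$. In fact yours is slightly more careful at two points the paper glosses over: your partial derivative correctly carries the factor of $2$ arising from $e_i$ appearing both as head and as tail (the paper writes $\sum_j A_u^{ij}\left(l_u(e_i)-l_u(e_j)\right)$ without it, harmlessly, since the derivative is set to zero), and your convexity observation via the positive semidefinite Laplacian ${\bf D}_u - {\bf A}_u$ supplies the sufficiency of the first-order conditions, which the paper leaves implicit.
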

		
		\begin{proof}
			Taking the derivative of the following equation
			\begin{equation*}
				E(l_u, {\bf A}_u) = \frac{1}{2} \sum_{i, j} A_u^{ij} \left( l_u(e_i) - l_u(e_j) \right)^2
			\end{equation*}
			with respect to $l_u(e_i)$ where $e_i \in \mathcal E \backslash \mathcal V$, we have
			\begin{equation*}
				\frac{\partial E(l_u, {\bf A}_u)}{\partial l_u(e_i)} = \sum_{j} A_u^{ij} \left( l_u(e_i) - l_u(e_j) \right).
			\end{equation*}
			
			The minimum-energy label function $l_u^*$ should satisfy that
			\begin{equation*}
				\frac{\partial E(l_u, {\bf A}_u)}{\partial l_u(e_i)} \bigg |_{l_u = l_u^*} = 0.
			\end{equation*}
			Therefore, we have
			\begin{equation*}
				l_u^*(e_i) = \frac{1}{\sum_{j} A_u^{ij}} \sum_{j} A_u^{ij} \ l_u^*(e_j) = \frac{1}{D_u^{ii}} \sum_{j} A_u^{ij} \ l_u^*(e_j), \ \forall e_i \in \mathcal E \backslash \mathcal V.
			\end{equation*}
		\end{proof}
		
		The harmonic property indicates that the value of $l_u^*$ at each non-item entity $e_i \in \mathcal E \backslash \mathcal V$ is the average of its neighboring entities, which leads to the following label propagation scheme \cite{zhu2005semi}:
		
		\begin{theorem}
		\label{thm:2}
			Repeating the following two steps:
			\begin{enumerate}
				\item Propagate labels for all entities: $l_u(\mathcal E) \leftarrow {\bf D}_u^{-1} {\bf A}_u l_u(\mathcal E)$, where $l_u(\mathcal E)$ is the vector of labels for all entities;
				\item Reset labels of all items to initial labels: $l_u(\mathcal V) \leftarrow {\bf Y}[u, \mathcal V]^\top$, where $l_u(\mathcal V)$ is the vector of labels for all items and ${\bf Y}[u, \mathcal V]=[y_{uv_1}, y_{uv_2}, \cdots]$ are initial labels;
			\end{enumerate}
			will lead to $l_u \rightarrow l_u^*$.
		\end{theorem}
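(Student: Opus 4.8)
The plan is to reduce the coupled two-step update to a single affine recursion on the labels of the non-item entities alone, identify its fixed point with the harmonic function $l_u^*$ from Theorem~\ref{thm:1}, and then show the recursion contracts to that fixed point geometrically. First I would reorder the entities so that the items $\mathcal V$ come first and write the row-stochastic propagation matrix $\mathbf P = \mathbf D_u^{-1}\mathbf A_u$ (each row sums to $1$ because $D_u^{ii}=\sum_j A_u^{ij}$) in block form
\begin{equation*}
\mathbf P = \begin{bmatrix} \mathbf P_{VV} & \mathbf P_{VN} \\ \mathbf P_{NV} & \mathbf P_{NN} \end{bmatrix},
\end{equation*}
where the subscript $N$ denotes the block indexed by the non-item entities $\mathcal E\backslash\mathcal V$. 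Writing $\mathbf f_V = \mathbf Y[u,\mathcal V]^\top$ for the clamped item labels and $l_N^{(t)}$ for the non-item labels after $t$ rounds, Step~2 resets the item block to $\mathbf f_V$ at the end of each round, so that the item block equals $\mathbf f_V$ whenever Step~1 is applied; composing Step~1 with Step~2 therefore collapses to the affine recursion
\begin{equation*}
l_N^{(t+1)} = \mathbf P_{NN}\, l_N^{(t)} + \mathbf P_{NV}\, \mathbf f_V .
\end{equation*}

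Next I would characterize the fixed point. Any $l_N^\infty$ satisfying $l_N^\infty = \mathbf P_{NN} l_N^\infty + \mathbf P_{NV}\mathbf f_V$ is, together with the clamped block $l_V = \mathbf f_V$, exactly a vector obeying $l_u(e_i) = (\mathbf P\, l_u)_i = \tfrac{1}{D_u^{ii}}\sum_j A_u^{ij}\,l_u(e_j)$ for every $e_i\in\mathcal E\backslash\mathcal V$ --- that is, precisely the harmonic condition established in Theorem~\ref{thm:1}. Hence once I know the recursion has a unique fixed point, it must coincide with $l_u^*$, and so it suffices to prove that the iterates converge to this unique fixed point.

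The hard part is convergence, which amounts to showing the spectral radius $\rho(\mathbf P_{NN}) < 1$. Here I would use that $\mathbf P_{NN}$ is entrywise nonnegative with every row sum at most $1$, and strictly below $1$ for any non-item entity adjacent to an item. Assuming the mild connectivity condition that every non-item entity can reach some item through $\mathcal G$ (which holds whenever each connected component contains at least one item), a random walk governed by $\mathbf P_{NN}$ is absorbed at the clamped items in finitely many steps, so for some power $k$ all row sums of $\mathbf P_{NN}^k$ are strictly less than $1$; thus $\|\mathbf P_{NN}^k\|_\infty < 1$ and $\rho(\mathbf P_{NN}) < 1$. Then $\mathbf P_{NN}^t \to \mathbf 0$, the Neumann series $(\mathbf I-\mathbf P_{NN})^{-1} = \sum_{m\ge 0}\mathbf P_{NN}^m$ converges, and unrolling the recursion gives
\begin{equation*}
l_N^{(t)} = \mathbf P_{NN}^t\, l_N^{(0)} + \sum_{m=0}^{t-1}\mathbf P_{NN}^m\,\mathbf P_{NV}\,\mathbf f_V \;\xrightarrow{t\to\infty}\; (\mathbf I-\mathbf P_{NN})^{-1}\mathbf P_{NV}\,\mathbf f_V,
\end{equation*}
the unique fixed point, independently of the initialization $l_N^{(0)}$. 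Combined with the previous paragraph this yields $l_u \to l_u^*$. I expect the spectral-radius bound to be the main obstacle: it is where the substochastic (rather than merely stochastic) nature of $\mathbf P_{NN}$ enters, and it quietly relies on the reachability assumption, without which $\mathbf P_{NN}$ could admit an eigenvalue $1$ and the limit would depend on the starting non-item labels.
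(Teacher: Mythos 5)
Your proposal follows the same route as the paper's proof: the same block decomposition of the row-normalized matrix ${\bf P} = {\bf D}_u^{-1}{\bf A}_u$ into item and non-item blocks, the same collapse of the two-step update into the affine recursion $l_N^{(t+1)} = {\bf P}_{NN} l_N^{(t)} + {\bf P}_{NV}{\bf f}_V$, and the same Neumann-series identification of the limit $({\bf I}-{\bf P}_{NN})^{-1}{\bf P}_{NV}{\bf f}_V$ with initialization-independence. The one place you genuinely diverge is the convergence step, and there your version is actually more careful than the paper's. The paper asserts outright that since ${\bf P}_{EE}$ is a sub-matrix of a row-stochastic matrix, there exists $\epsilon < 1$ bounding \emph{every} row sum of ${\bf P}_{EE}$; this is false in general, since a non-item entity all of whose neighbors are non-items has row sum exactly $1$. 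The paper's bound holds only under the implicit assumption that every non-item entity is adjacent to at least one item. You instead assume only reachability (every non-item entity can reach some item in $\mathcal G$) and argue that some power ${\bf P}_{NN}^k$ has all row sums strictly below $1$, giving $\|{\bf P}_{NN}^k\|_\infty < 1$ and hence $\rho({\bf P}_{NN}) < 1$ --- a strictly weaker hypothesis and a correct substochastic-absorption argument, and you rightly flag that without it ${\bf P}_{NN}$ can have eigenvalue $1$ and the limit depends on $l_N^{(0)}$. Your explicit identification of the fixed point with the harmonic function of Theorem~\ref{thm:1} is also a nice touch: the paper simply writes down the closed form and labels it $l_u^*$, whereas you justify that equality via the harmonic condition. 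In short: same skeleton, but your treatment of the spectral-radius bound repairs a real (if standard) gap in the paper's own argument.
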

		
		\begin{proof}
			Let $l_u(\mathcal E) = \begin{bmatrix}l_u(\mathcal V)\\[0.3 em]l_u(\mathcal E \backslash \mathcal V)\end{bmatrix}$.			Since $l_u(\mathcal V)$ is fixed on ${\bf Y}[u, \mathcal V]$, we are only interested in $l_u(\mathcal E \backslash \mathcal V)$.
			We denote ${\bf P} = {\bf D}_u^{-1} {\bf A}_u$ (the subscript $u$ is omitted from $\bf P$ for ease of notation), and partition matrix $\bf P$ into sub-matrices according to the partition of $l_u$:
			\begin{equation*}
				{\bf P} = \begin{bmatrix}
					{\bf P}_{VV} \quad {\bf P}_{VE}\\
					{\bf P}_{EV} \quad {\bf P}_{EE}\\
				\end{bmatrix}.
			\end{equation*}
			Then the label propagation scheme is equivalent to
			\begin{equation}
			\label{eq:iteration}
				l_u(\mathcal E \backslash \mathcal V) \leftarrow {\bf P}_{EV} {\bf Y}[u, \mathcal V]^\top + {\bf P}_{EE} l_u(\mathcal E \backslash \mathcal V).
			\end{equation}
			Repeat the above procedure, we have
			\begin{equation}
			\label{eq:expand}
				l_u(\mathcal E \backslash \mathcal V) = \lim_{n \rightarrow \infty} ({\bf P}_{EE})^n l_u^{(0)}(\mathcal E \backslash \mathcal V) + \left( \sum_{i=1}^n ({\bf P}_{EE})^{i-1} \right) {\bf P}_{EV} {\bf Y}[u, \mathcal V]^\top,
			\end{equation}
			where $l_u^{(0)}(\mathcal E \backslash \mathcal V)$ is the initial value for $l_u(\mathcal E \backslash \mathcal V)$.
			Now we show that $lim_{n \rightarrow \infty} ({\bf P}_{EE})^n l_u^{(0)}(\mathcal E \backslash \mathcal V) = {\bf 0}$.
			Since $\bf P$ is row-normalized and ${\bf P}_{EE}$ is a sub-matrix of $\bf P$, we have
			\begin{equation*}
				\exists \epsilon < 1, \ \sum_j {\bf P}_{EE} [i, j] \leq \epsilon,
			\end{equation*}
			for all possible row index $i$.
			Therefore,
			\begin{equation*}
			\begin{split}
				\sum_j ({\bf P}_{EE})^n [i, j] &= \sum_j \left( ({\bf P}_{EE})^{(n-1)} {\bf P}_{EE} \right) [i, j]\\
				&= \sum_j \sum_k ({\bf P}_{EE})^{(n-1)}[i, k] \ {\bf P}_{EE} [k, j]\\
				&= \sum_k ({\bf P}_{EE})^{(n-1)}[i, k] \sum_j {\bf P}_{EE} [k, j]\\
				&\leq \sum_k ({\bf P}_{EE})^{(n-1)}[i, k] \ \epsilon\\
				&\leq \cdots \leq \epsilon^n.
			\end{split}
			\end{equation*}
			As $n$ goes infinity, the row sum of $({\bf P}_{EE})^n$ converges to zero, which implies that $({\bf P}_{EE})^n l_u^{(0)}(\mathcal E \backslash \mathcal V) \rightarrow {\bf 0}$.
			It's clear that the choice of initial value $l_u^{(0)}(\mathcal E \backslash \mathcal V)$ does not affect the convergence.
			
			Since $lim_{n \rightarrow \infty} ({\bf P}_{EE})^n l_u^{(0)}(\mathcal E \backslash \mathcal V) = {\bf 0}$, Eq. (\ref{eq:expand}) becomes
			\begin{equation*}
				l_u(\mathcal E \backslash \mathcal V) = \lim_{n \rightarrow \infty} \left( \sum_{i=1}^n ({\bf P}_{EE})^{i-1} \right) {\bf P}_{EV} {\bf Y}[u, \mathcal V]^\top.
			\end{equation*}
			Denote
			\begin{equation*}
				{\bf T} = \lim_{n \rightarrow \infty} \sum_{i=1}^n ({\bf P}_{EE})^{i-1} = \sum_{i=1}^\infty ({\bf P}_{EE})^{i-1},
			\end{equation*}
			and we have
			\begin{equation*}
				{\bf T} - {\bf T} {\bf P}_{EE} = \sum_{i=1}^\infty ({\bf P}_{EE})^{i-1} - \sum_{i=1}^\infty ({\bf P}_{EE})^{i} = {\bf I}.			\end{equation*}
			Therefore, we derive that
			\begin{equation*}
				{\bf T} = (I - {\bf P}_{EE})^{-1},
			\end{equation*}
			and
			\begin{equation*}
				l_u(\mathcal E \backslash \mathcal V) = (I - {\bf P}_{EE})^{-1} {\bf P}_{EV} {\bf Y}[u, \mathcal V]^\top.
			\end{equation*}
			This is the unique fixed point and therefore the unique solution to Eq. (\ref{eq:iteration}).
			Repeating the steps in Theorem \ref{thm:2} leads to
			\begin{equation*}
				l_u(\mathcal E) \rightarrow l_u^*(\mathcal E) =
				\begin{bmatrix}
					{\bf Y}[u, \mathcal V]^\top\\[0.3 em]
					(I - {\bf P}_{EE})^{-1} {\bf P}_{EV} {\bf Y}[u, \mathcal V]^\top
				\end{bmatrix}.
			\end{equation*}
		\end{proof}
		
		Theorem \ref{thm:2} provides a way for reaching the minimum-energy of relevancy label function $E$.
		However, $l_u^*$ does not provide any signal for updating the edge weights matrix ${\bf A}_u$, since the labeled part of $l_u^*$, i.e., $l_u^* (\mathcal V)$, equals their true relevancy labels ${\bf Y}[u, \mathcal V]$;
		Moreover, we do not know true relevancy labels for the unlabeled nodes $l_u^* (\mathcal E \backslash \mathcal V)$.
		
		To solve the issue, we propose minimizing the \textit{leave-one-out} loss \cite{zhang2007hyperparameter}.
		Suppose we hold out a single item $v$ and treat it unlabeled.
		Then we predict its label by using the rest of (labeled) items and (unlabeled) non-item entities.
		The prediction process is identical to label propagation in Theorem \ref{thm:2}, except that the label of item $v$ is hidden and needs to be calculated.
		This way, the difference between the true relevancy label of $v$ (i.e., $y_{uv}$) and the predicted label $\hat l_u(v)$ serves as a supervised signal for regularizing edge weights:
		\begin{equation}
		\label{eq:ls_regularization}
			R({\bf A}) = \sum_{u} R({\bf A}_u) = \sum_{u} \sum_{v} J \big( y_{uv}, \hat l_u(v) \big),
		\end{equation}
		where $J$ is the cross-entropy loss function.
		Given the regularization in Eq. (\ref{eq:ls_regularization}), an ideal edge weight matrix $\bf A$ should reproduce the true relevancy label of each held-out item while also satisfying the smoothness of relevancy labels.

	\subsection{The Unified Loss Function}
		Combining knowledge-aware graph neural networks and LS regularization, we reach the following complete loss function:
		\begin{equation}
		\label{eq:loss}
			\min_{\bf{W}, \bf{A}} \mathcal L = \min_{\bf{W}, \bf{A}} \ \sum_{u, v} J(y_{uv}, \hat y_{uv}) + \lambda R({\bf A}) + \gamma \| \mathcal F \|^2_2,
		\end{equation}
		where $\| \mathcal F \|^2_2$ is the L2-regularizer, $\lambda$ and $\gamma$ are balancing hyper-parameters.
		In Eq. (\ref{eq:loss}), the first term corresponds to the part of GNN that learns the transformation matrix $\bf{W}$ and edge weights $\bf{A}$ simultaneously, while the second term $R(\cdot)$ corresponds to the part of label smoothness that can be seen as adding constraint on edge weights $\bf{A}$.
		Therefore, $R(\cdot)$ serves as regularization on $\bf{A}$ to assist GNN in learning edge weights.
		
		It is also worth noticing that the first term can be seen as \textit{feature propagation} on the KG while the second term $R(\cdot)$ can be seen as \textit{label propagation} on the KG.
		A recommender for a specific user $u$ is actually a mapping from item features to user-item interaction labels, i.e., $\mathcal F_u: {\bf E}_v \rightarrow y_{uv}$ where ${\bf E}_v$ is the feature vector of item $v$.
		Therefore, Eq. (\ref{eq:loss}) utilizes the structural information of the KG on both the feature side and the label side of $\mathcal F_u$ to capture users' higher-order preferences.

\subsection{Discussion}
	\label{sec:discussion}
		How can the knowledge graph help find users' interests?
		To intuitively understand the role of the KG, we make an analogy with a physical equilibrium model as shown in Figure \ref{fig:discussion}.
		Each entity/item is seen as a particle, while the supervised positive user-relevancy signal acts as the force pulling the observed positive items up from the decision boundary and the negative items signal acts as the force pushing the unobserved items down.
		Without the KG (Figure \ref{fig:d1}), these items are only loosely connected with each other through the collaborative filtering effect (which is not drawn here for clarity).
		In contrast, edges in the KG serve as the rubber bands that impose explicit constraints on connected entities.
		When number of layers is $L=1$ (Figure \ref{fig:d2}), representation of each entity is a mixture of itself and its immediate neighbors, therefore, optimizing on the positive items will simultaneously pull their immediate neighbors up together.
		The upward force goes deeper in the KG with the increase of $L$ (Figure \ref{fig:d3}), which helps explore users' long-distance interests and pull up more positive items.
		It is also interesting to note that the proximity constraint exerted by the KG is \textit{personalized} since the strength of the rubber band (i.e., $s_u(r)$) is \textit{user-specific} and \textit{relation-specific}:
		One user may prefer relation $r_1$ (Figure \ref{fig:d2}) while another user (with same observed items but different unobserved items) may prefer relation $r_2$ (Figure \ref{fig:d4}).
		
		Despite the force exerted by edges in the KG, edge weights may be set inappropriately, for example, too small to pull up the unobserved items (i.e., rubber bands are too weak).
		Next, we show by Figure \ref{fig:d5} that how the label smoothness assumption helps regularizing the learning of edge weights.
		Suppose we hold out the positive sample in the upper left and we intend to reproduce its label by the rest of items.
		Since the true relevancy label of the held-out sample is 1 and the upper right sample has the largest label value, the LS regularization term $R({\bf A})$ would enforce the edges with arrows to be large so that the label can ``flow'' from the blue one to the striped one as much as possible.
		As a result, this will tighten the rubber bands (denoted by arrows) and encourage the model to pull up the two upper pink items to a greater extent.
		
	\begin{table}[t]
			\centering
			\begin{tabular}{c|cccc}
				\hline
				& Movie & Book & Music & Restaurant\\
				\hline
				\# users & 138,159 & 19,676 & 1,872 & 2,298,698\\
				\# items & 16,954 & 20,003 & 3,846 & 1,362\\
				\# interactions & 13,501,622 & 172,576 & 42,346 & 23,416,418\\
				\# entities & 102,569 & 25,787 & 9,366 & 28,115\\
				\# relations & 32 & 18 & 60 & 7\\
				\# KG triples & 499,474 & 60,787 & 15,518 & 160,519\\
				\hline
			\end{tabular}
			\vspace{0.05in}
			\caption{Statistics of the four datasets: MovieLens-20M (movie), Book-Crossing (book), Last.FM (music), and Dianping-Food (restaurant).}
			\label{table:statistics}
			\vspace{-0.2in}
		\end{table}

\section{Experiments}
	In this section, we evaluate the proposed KGNN-LS model, and present its performance on four real-world scenarios: movie, book, music, and restaurant recommendations.
	
	\subsection{Datasets}
		We utilize the following four datasets in our experiments for movie, book, music, and restaurant recommendations, respectively, in which the first three are public datasets and the last one is from Meituan-Dianping Group.
		We use Satori\footnote{https://searchengineland.com/library/bing/bing-satori}, a commercial KG built by Microsoft, to construct sub-KGs for MovieLens-20M, Book-Crossing, and Last.FM datasets.
		The KG for Dianping-Food dataset is constructed by the internal toolkit of Meituan-Dianping Group.
		Further details of datasets are provided in Appendix A.
		\begin{itemize}
			\item \textbf{MovieLens-20M}\footnote{\url{https://grouplens.org/datasets/movielens/}} is a widely used benchmark dataset in movie recommendations, which consists of approximately 20 million explicit ratings (ranging from 1 to 5) on the MovieLens website. The corresponding KG contains 102,569 entities, 499,474 edges and 32 relation-types.
			\item \textbf{Book-Crossing}\footnote{\url{http://www2.informatik.uni-freiburg.de/~cziegler/BX/}} contains 1 million ratings (ranging from 0 to 10) of books in the Book-Crossing community.  The corresponding KG contains 25,787 entities, 60,787 edges and 18 relation-types.
			\item \textbf{Last.FM}\footnote{\url{https://grouplens.org/datasets/hetrec-2011/}} contains musician listening information from a set of 2 thousand users from Last.fm online music system.  The corresponding KG contains 9,366 entities, 15,518 edges and 60 relation-types.
			\item \textbf{Dianping-Food} is provided by Dianping.com\footnote{\url{https://www.dianping.com/}}, which contains over 10 million interactions (including clicking, buying, and adding to favorites) between approximately 2 million users and 1 thousand restaurants. The corresponding KG contains 28,115 entities, 160,519 edges and 7 relation-types.
		\end{itemize}
		
		The statistics of the four datasets are shown in Table \ref{table:statistics}.

		\begin{figure}[t]
		\centering
        \begin{subfigure}[b]{0.22\textwidth}
            \includegraphics[width=\textwidth]{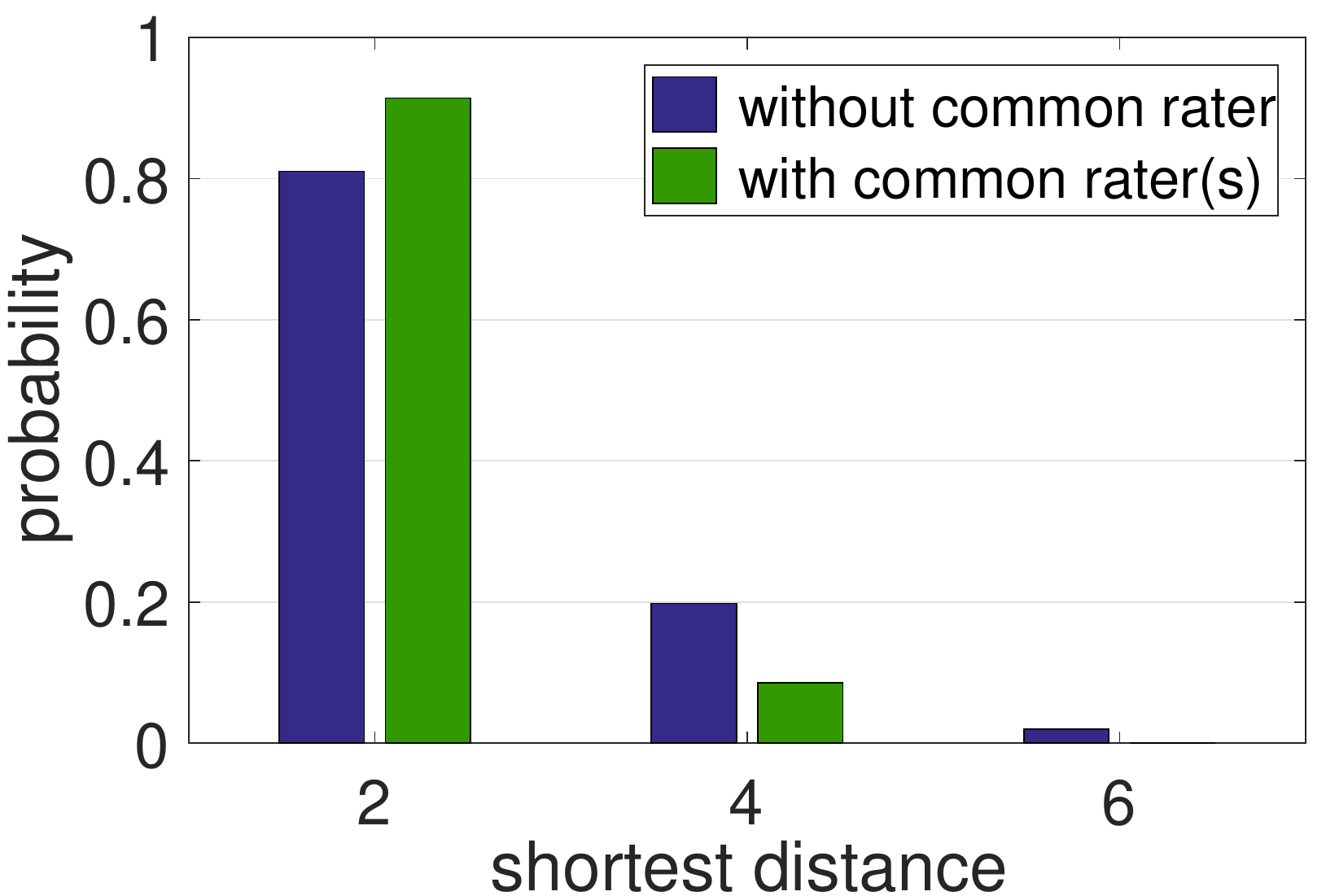}
            \caption{MovieLens-20M}
            \label{fig:es_1}
        \end{subfigure}
        \hfill
        \begin{subfigure}[b]{0.22\textwidth}
            \includegraphics[width=\textwidth]{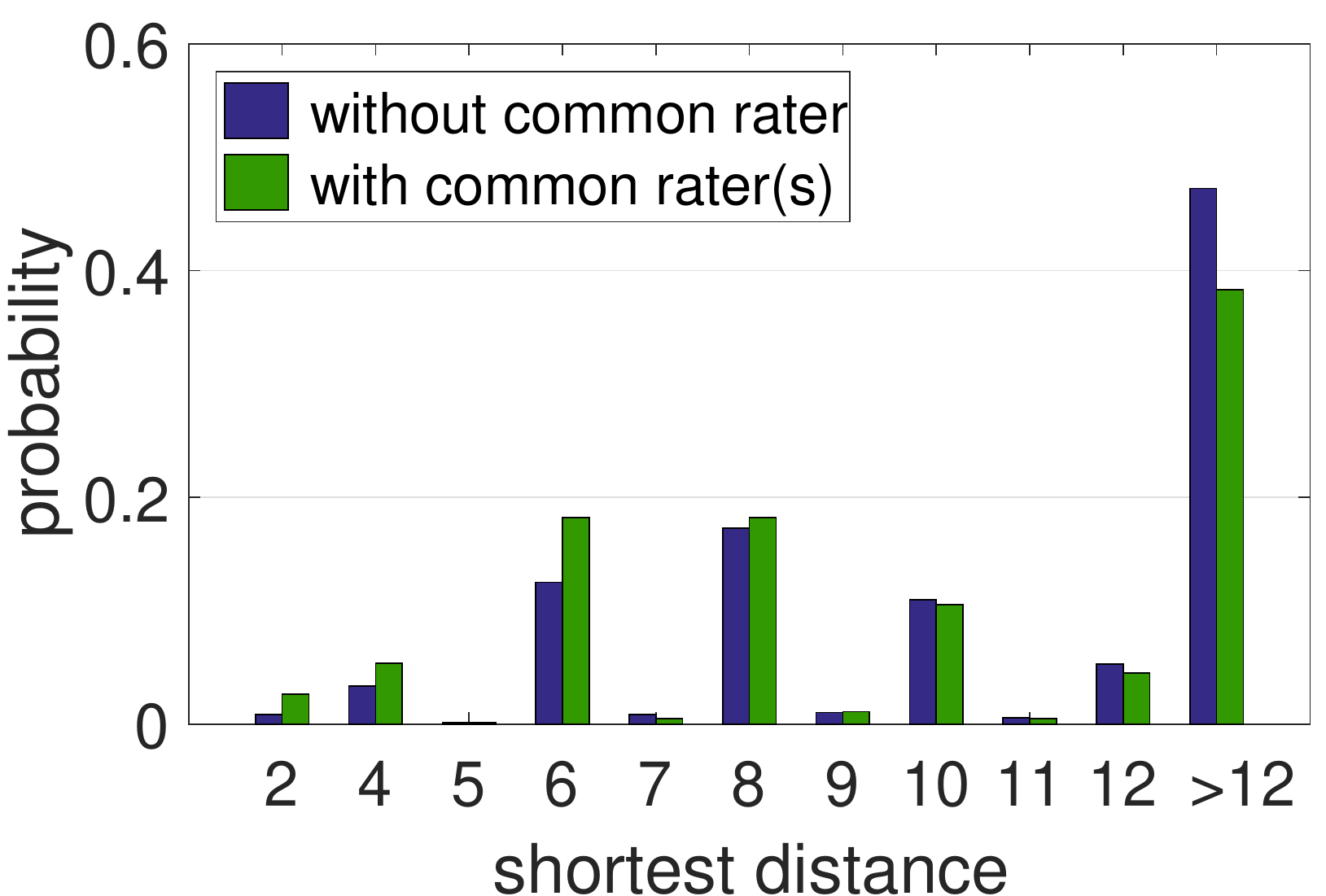}
            \caption{Last.FM}
            \label{fig:es_2}
        \end{subfigure}
        \caption{Probability distribution of the shortest path distance between two randomly sampled items in the KG under the circumstance that (1) they have no common user in the dataset; (2) they have common user(s) in the dataset.}
        \label{fig:es}
    	\end{figure}
    	
    		\begin{table*}[t]
    			\centering
    			\setlength{\tabcolsep}{2pt}
    			\begin{tabular}{c|cccc|cccc|cccc|cccc}
    				\hline
    				\multirow{2}{*}{Model} & \multicolumn{4}{c|}{MovieLens-20M} & \multicolumn{4}{c|}{Book-Crossing} & \multicolumn{4}{c|}{Last.FM} & \multicolumn{4}{c}{Dianping-Food} \\
            		& \multicolumn{1}{c}{\textit{R@2}} & \multicolumn{1}{c}{\textit{R@10}} & \multicolumn{1}{c}{\textit{R@50}} & \multicolumn{1}{c|}{\textit{R@100}} & \multicolumn{1}{c}{\textit{R@2}} & \multicolumn{1}{c}{\textit{R@10}} & \multicolumn{1}{c}{\textit{R@50}} & \multicolumn{1}{c|}{\textit{R@100}} & \multicolumn{1}{c}{\textit{R@2}} & \multicolumn{1}{c}{\textit{R@10}} & \multicolumn{1}{c}{\textit{R@50}} & \multicolumn{1}{c|}{\textit{R@100}} & \multicolumn{1}{c}{\textit{R@2}} & \multicolumn{1}{c}{\textit{R@10}} & \multicolumn{1}{c}{\textit{R@50}} & \multicolumn{1}{c}{\textit{R@100}} \\
            		\hline
            		SVD & 0.036 & 0.124 & 0.277 & 0.401 & 0.027 & 0.046 & 0.077 & 0.109 & 0.029 & 0.098 & 0.240 & 0.332 & 0.039 & 0.152 & 0.329 & 0.451 \\
            		LibFM & 0.039 & 0.121 & 0.271 & 0.388 & 0.033 & 0.062 & 0.092 & 0.124 & 0.030 & 0.103 & 0.263 & 0.330 & 0.043 & 0.156 & 0.332 & 0.448 \\
            		LibFM + TransE & 0.041 & 0.125 & 0.280 & 0.396 & 0.037 & 0.064 & 0.097 & 0.130 & 0.032 & 0.102 & 0.259 & 0.326 & 0.044 & 0.161 & \textbf{0.343} & 0.455 \\
            		PER & 0.022 & 0.077 & 0.160 & 0.243 & 0.022 & 0.041 & 0.064 & 0.070 & 0.014 & 0.052 & 0.116 & 0.176 & 0.023 & 0.102 & 0.256 & 0.354 \\
            		CKE & 0.034 & 0.107 & 0.244 & 0.322 & 0.028 & 0.051 & 0.079 & 0.112 & 0.023 & 0.070 & 0.180 & 0.296 & 0.034 & 0.138 & 0.305 & 0.437 \\
            		RippleNet & \textbf{0.045} & 0.130 & 0.278 & 0.447 & 0.036 & 0.074 & 0.107 & 0.127 & 0.032 & 0.101 & 0.242 & 0.336 & 0.040 & 0.155 & 0.328 & 0.440 \\
            		\hline
            		KGNN-LS & 0.043 & \textbf{0.155} & \textbf{0.321} & \textbf{0.458} & \textbf{0.045} & \textbf{0.082} & \textbf{0.117} & \textbf{0.149} & \textbf{0.044} & \textbf{0.122} & \textbf{0.277} & \textbf{0.370} & \textbf{0.047} & \textbf{0.170} & 0.340 & \textbf{0.487} \\
            		\hline
				\end{tabular}
				\vspace{0.05in}
				\caption{The results of $Recall@K$ in top-K recommendation.}
				\label{table:topk}
				\vspace{-0.1in}
			\end{table*}
			
			\begin{table}[t]
    			\centering
    			\setlength{\tabcolsep}{6pt}
    			\begin{tabular}{c|cccc}
    				\hline
    				Model & Movie & Book & Music & Restaurant \\
            		\hline
            		SVD & 0.963 & 0.672 & 0.769 & 0.838 \\
            		LibFM & 0.959 & 0.691 & 0.778 & 0.837 \\
            		LibFM + TransE & 0.966 & 0.698 & 0.777 & 0.839 \\
            		PER & 0.832 & 0.617 & 0.633 & 0.746 \\
            		CKE & 0.924 & 0.677 & 0.744 & 0.802 \\
            		RippleNet & 0.960 & 0.727 & 0.770 & 0.833 \\
            		\hline
            		KGNN-LS & \textbf{0.979} & \textbf{0.744} & \textbf{0.803} & \textbf{0.850} \\
            		\hline
				\end{tabular}
				\vspace{0.05in}
				\caption{The results of $AUC$ in CTR prediction.}
				\label{table:ctr}
				\vspace{-0.2in}
			\end{table}
		
	\subsection{Baselines}
		We compare the proposed KGNN-LS model with the following baselines for recommender systems, in which the first two baselines are KG-free while the rest are all KG-aware methods.
		The hyper-parameter setting of KGNN-LS is provided in Appendix B.
		\begin{itemize}
			\item
				\textbf{SVD} \cite{koren2008factorization} is a classic CF-based model using inner product to model user-item interactions.
				We use the unbiased version (i.e., the predicted engaging probability is modeled as $y_{uv} = {\bf u}^\top {\bf v}$).
				The dimension and learning rate for the four datasets are set as: $d = 8$, $\eta = 0.5$ for MovieLens-20M, Book-Crossing; $d = 8$, $\eta = 0.1$ for Last.FM; $d = 32$, $\eta = 0.1$ for Dianping-Food.
			\item
				\textbf{LibFM} \cite{rendle2012factorization} is a widely used feature-based factorization model for CTR prediction.
				We concatenate user ID and item ID as input for LibFM.
				The dimension is set as $\{1, 1, 8\}$ and the number of training epochs is $50$ for all datasets.
			\item
				\textbf{LibFM + TransE} extends LibFM by attaching an entity representation learned by TransE \cite{bordes2013translating} to each user-item pair.
				The dimension of TransE is $32$ for all datasets.
			\item
				\textbf{PER} \cite{yu2014personalized} is a representative of path-based methods, which treats the KG as heterogeneous information networks and extracts meta-path based features to represent the connectivity between users and items.
				We use manually designed ``user-item-attribute-item'' as meta-paths, i.e., ``user-movie-director-movie'', ``user-movie-genre-movie'', and ``user-movie-star-movie'' for MovieLens-20M; ``user-book-author-book'' and ``user-book-genre-book'' for Book-Crossing, ``user-musician-date\_of\_birth-musician'' (date of birth is discretized), ``user-musician-country-musician'', and ``user-musician-genre-musician'' for Last.FM; ``user-restaurant-dish-restaurant'', ``user-restaurant-business\_area-restaurant'', ``user-restaurant-tag-restaurant'' for Dianping-Food.
				The settings of dimension and learning rate are the same as SVD.
			\item
				\textbf{CKE} \cite{zhang2016collaborative} is a representative of embedding-based methods, which combines CF with structural, textual, and visual knowledge in a unified framework.
				We implement CKE as CF plus a structural knowledge module in this paper.
				The dimension of embedding for the four datasets are $64$, $128$, $64$, $64$.
				The training weight for KG part is $0.1$ for all datasets.
				The learning rate are the same as in SVD.
			\item
				\textbf{RippleNet} \cite{wang2018ripple} is a representative of hybrid methods, which is a memory-network-like approach that propagates users' preferences on the KG for recommendation.
				For RippleNet, $d=8$, $H=2$, $\lambda_1 = 10^{-6}$, $\lambda_2=0.01$, $\eta=0.01$ for MovieLens-20M; $d=16$, $H=3$, $\lambda_1 = 10^{-5}$, $\lambda_2=0.02$, $\eta=0.005$ for Last.FM; $d=32$, $H=2$, $\lambda_1 = 10^{-7}$, $\lambda_2=0.02$, $\eta=0.01$ for Dianping-Food.
		\end{itemize}

	\subsection{Validating the Connection between $\mathcal G$ and $\bf Y$}
    	To validate the connection between the knowledge graph $\mathcal G$ and user-item interaction $\bf Y$, we conduct an empirical study where we investigate the correlation between the shortest path distance of two randomly sampled items in the KG and whether they have common user(s) in the dataset, that is there exist user(s) that interacted with both items.
		For MovieLens-20M and Last.FM, we randomly sample ten thousand item pairs that have no common users and have at least one common user, respectively, then count the distribution of their shortest path distances in the KG.
		The results are presented in Figure \ref{fig:es}, which clearly show that \textit{if two items have common user(s) in the dataset, they are likely to be more close in the KG}.
		For example, if two movies have common user(s) in MovieLens-20M, there is a probability of $0.92$ that they will be within 2 hops in the KG, while the probability is $0.80$ if they have no common user.
		This finding empirically demonstrates that exploiting the proximity structure of the KG can assist making recommendations.
		This also justifies our motivation to use label smoothness regularization to help learn entity representations.

    		\begin{figure*}
  				\begin{minipage}[t]{0.3\linewidth}    				\centering 
    				\includegraphics[width=\textwidth]{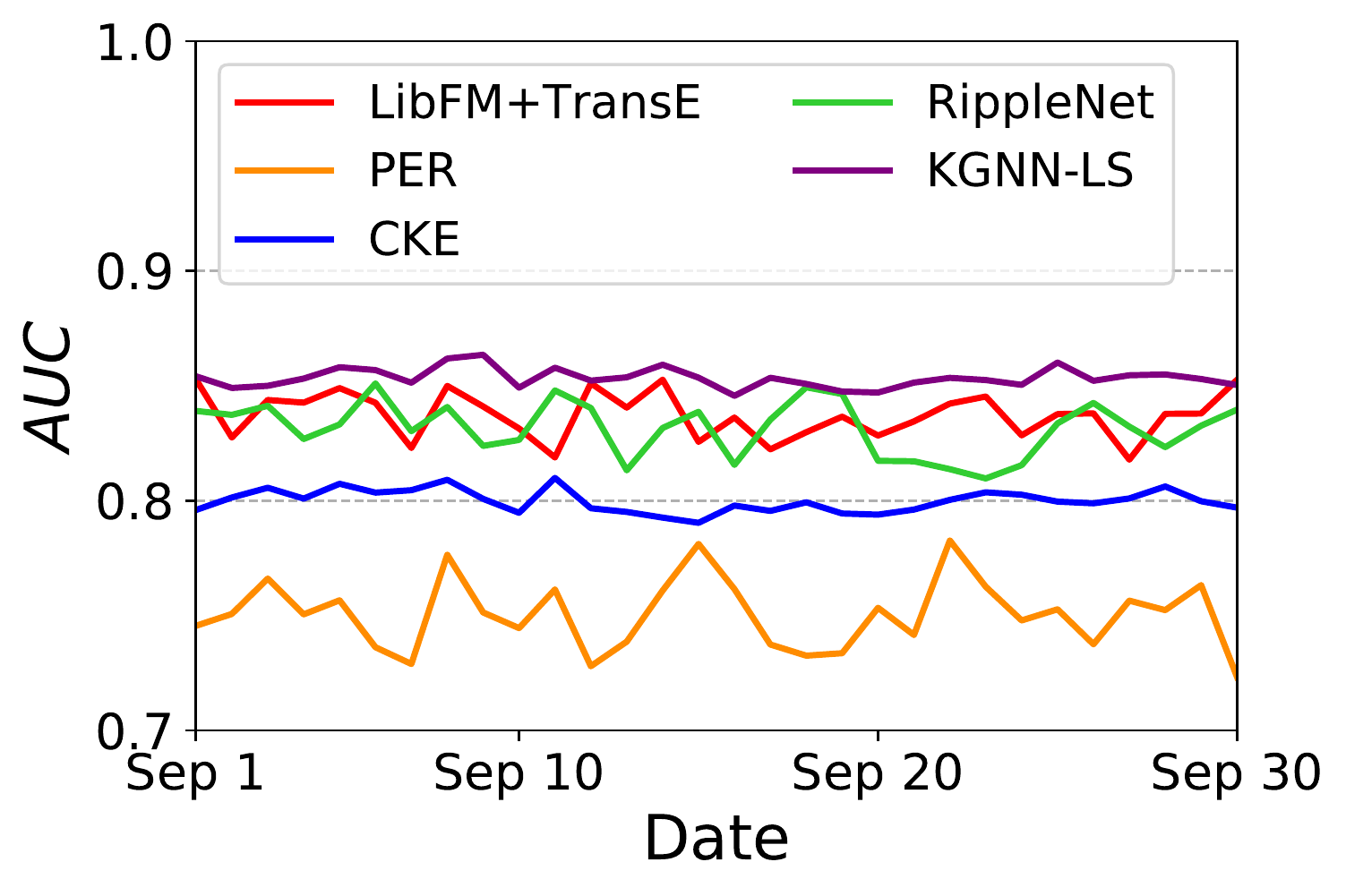}
    				\vspace{-0.2in}
    				\caption{Daily $AUC$ of all methods on Dianping-Food in September 2018.} 
    				\label{fig:food} 
  				\end{minipage}
  				\hfill
  				\begin{minipage}[t]{0.3\linewidth} 
    				\centering 
    				\includegraphics[width=\textwidth]{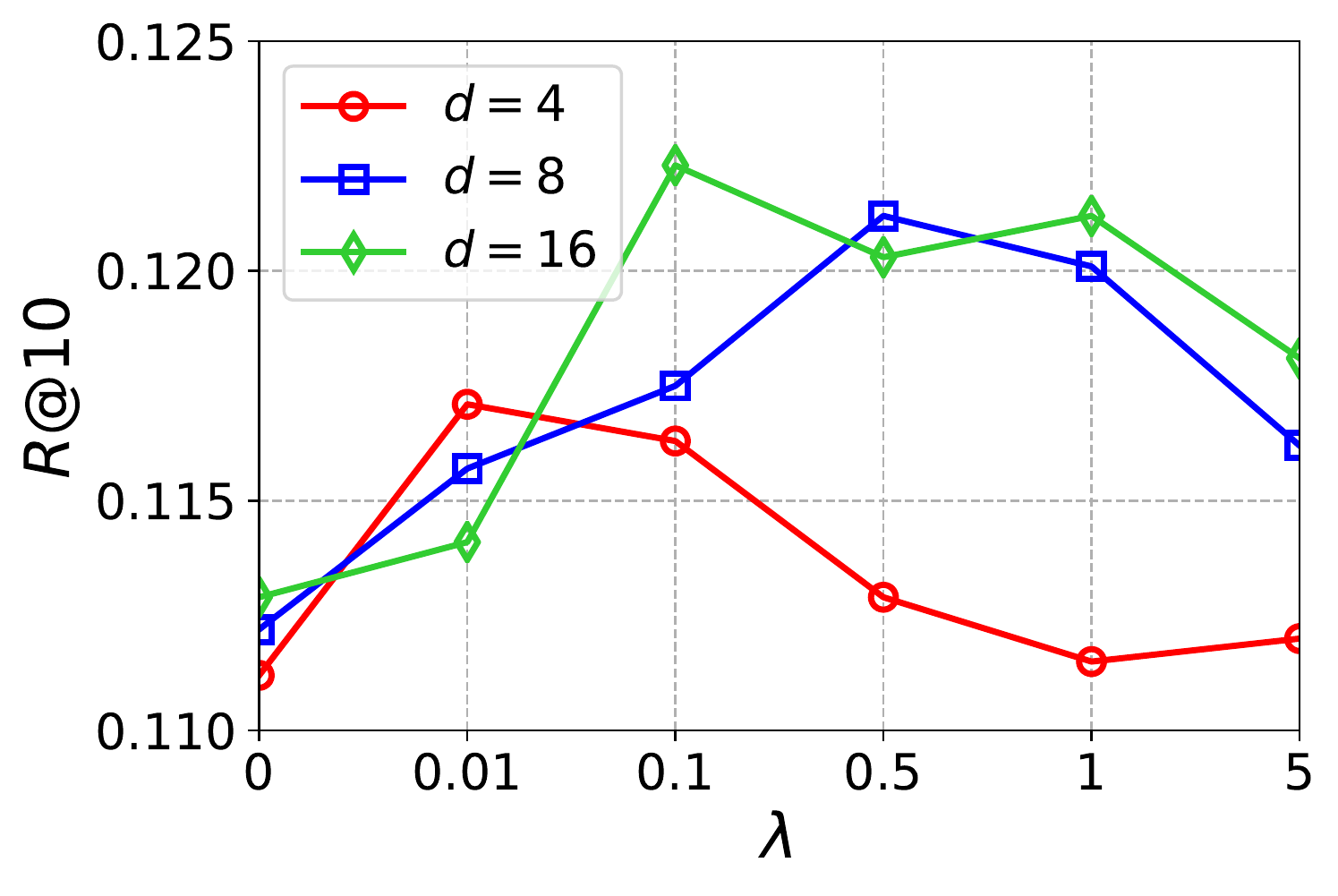}
    				\vspace{-0.2in}
    				\caption{Effectiveness of LS regularization on Last.FM.} 
    				\label{fig:ls} 
  				\end{minipage}
  				\hfill
  				\begin{minipage}[t]{0.315\linewidth} 
    				\centering 
    				\includegraphics[width=\textwidth]{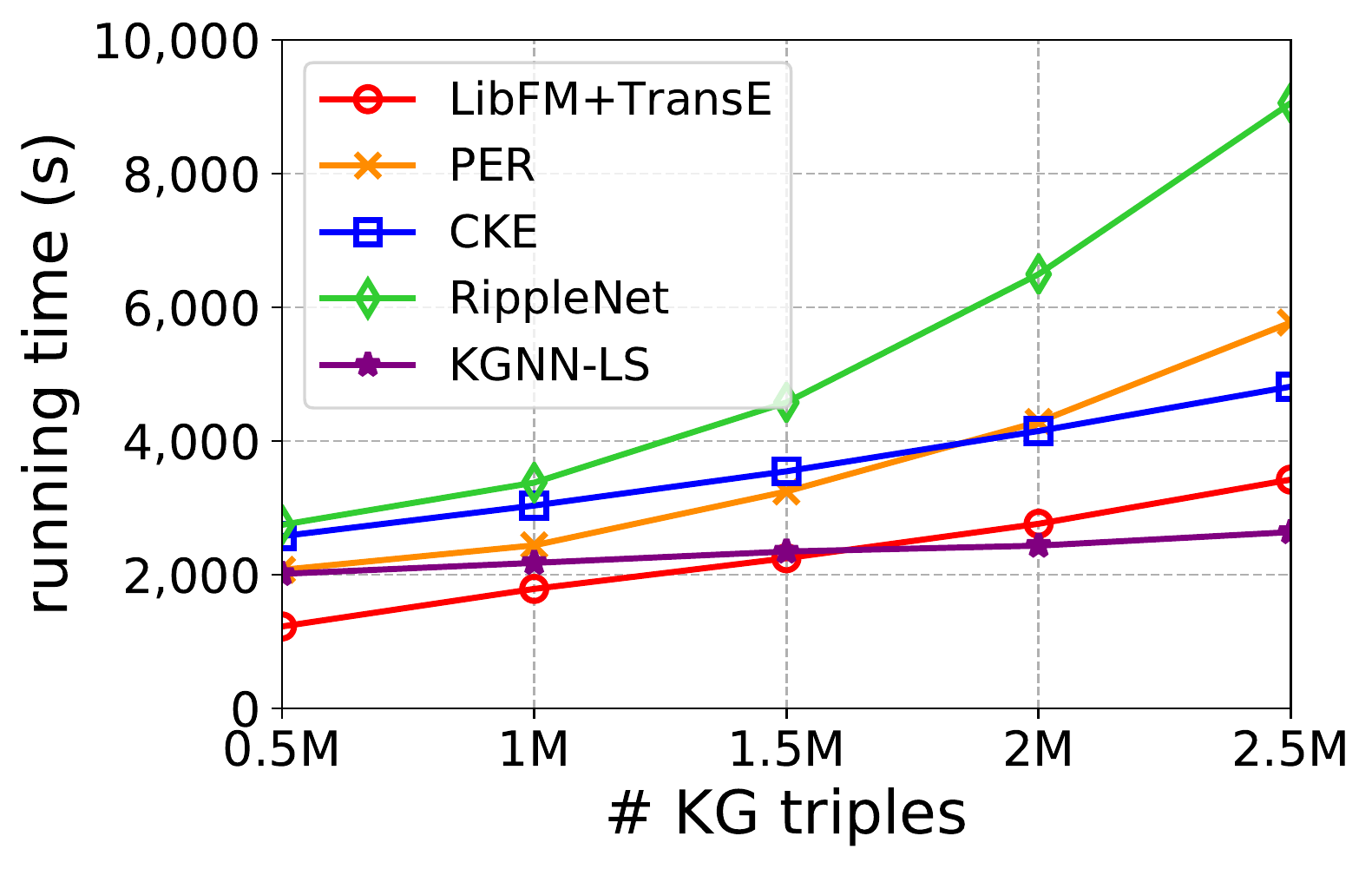}
    				\vspace{-0.2in}
    				\caption{Running time of all methods w.r.t. KG size on MovieLens-20M.} 
    				\label{fig:scalability} 
  				\end{minipage} 
			\end{figure*}
		
	\subsection{Results}
		\subsubsection{Comparison with Baselines}			We evaluate our method in two experiment scenarios:
			(1) In top-$K$ recommendation, we use the trained model to select $K$ items with highest predicted click probability for each user in the test set, and choose $Recall@K$ to evaluate the recommended sets.
			(2) In click-through rate (CTR) prediction, we apply the trained model to predict each piece of user-item pair in the test set (including positive items and randomly selected negative items).
			We use $AUC$ as the evaluation metric in CTR prediction.
			
			The results of top-$K$ recommendation and CTR prediction are presented in Tables \ref{table:topk} and \ref{table:ctr}, respectively, which show that KGNN-LS outperforms baselines by a significant margin.
			For example, the $AUC$ of KGNN-LS surpasses baselines by $5.1\%$, $6.9\%$, $8.3\%$, and $4.3\%$ on average in MovieLens-20M, Book-Crossing, Last.FM, and Dianping-Food datasets, respectively.
        				
			We also show daily performance of KGNN-LS and baselines on Dianping-Food to investigate performance stability.
			Figure \ref{fig:food} shows their $AUC$ score from September 1, 2018 to September 30, 2018.
			We notice that the curve of KGNN-LS is consistently above baselines over the test period;
			Moreover, the performance of KGNN-LS is also with low variance, which suggests that KGNN-LS is also robust and stable in practice.

		\subsubsection{Effectiveness of LS Regularization}
			Is the proposed LS regularization helpful in improving the performance of GNN?
			To study the effectiveness of LS regularization, we fix the dimension of hidden layers as $4$, $8$, and $16$, then vary $\lambda$ from $0$ to $5$ to see how performance changes.
			The results of $R@10$ in Last.FM dataset are plotted in Figure \ref{fig:ls}.
			It is clear that the performance of KGNN-LS with a non-zero $\lambda$ is better than $\lambda=0$ (the case of Wang et al. \cite{wang2019knowledge}), which justifies our claim that LS regularization can assist learning the edge weights in a KG and achieve better generalization in recommender systems.
			But note that a too large $\lambda$ is less favorable, since it overwhelms the overall loss and misleads the direction of gradients.
			According to the experiment results, we find that a $\lambda$ between $0.1$ and $1.0$ is preferable in most cases.

		\subsubsection{Results in cold-start scenarios}
			\begin{table}[t]
				\centering
				\setlength{\tabcolsep}{6pt}
				\begin{tabular}{c|ccccc}
					\hline
					$r$ & $20\%$ & $40\%$ & $60\%$ & $80\%$ & $100\%$ \\
					\hline
					SVD & 0.882 & 0.913 & 0.938 & 0.955 & 0.963 \\
					LibFM & 0.902 & 0.923 & 0.938 & 0.950 & 0.959 \\
					LibFM+TransE & 0.914 & 0.935 & 0.949 & 0.960 & 0.966 \\
					PER & 0.802 & 0.814 & 0.821 & 0.828 & 0.832 \\
					CKE & 0.898 & 0.910 & 0.916 & 0.921 & 0.924 \\
					RippleNet & 0.921 & 0.937 & 0.947 & 0.955 & 0.960 \\
					KGNN-LS & \textbf{0.961} & \textbf{0.970} & \textbf{0.974} & \textbf{0.977} & \textbf{0.979} \\
					\hline
				\end{tabular}
				\vspace{0.05in}
				\caption{$AUC$ of all methods w.r.t. the ratio of training set $r$.}
				\label{table:sparse}
				\vspace{-0.2in}
			\end{table}
		
			One major goal of using KGs in recommender systems is to alleviate the sparsity issue.
			To investigate the performance of KGNN-LS in cold-start scenarios, we vary the size of training set of MovieLens-20M from $r=100\%$ to $r=20\%$ (while the validation and test set are kept fixed), and report the results of $AUC$ in Table \ref{table:sparse}.
			When $r=20\%$, $AUC$ decreases by $8.4\%$, $5.9\%$, $5.4\%$, $3.6\%$, $2.8\%$, and $4.1\%$ for the six baselines compared to the model trained on full training data ($r=100\%$), but the performance decrease of KGNN-LS is only $1.8\%$.
			This demonstrates that KGNN-LS still maintains predictive performance even when user-item interactions are sparse.

		\subsubsection{Hyper-parameters Sensitivity}
			We first analyze the sensitivity of KGNN-LS to the number of GNN layers $L$.
			We vary $L$ from $1$ to $4$ while keeping other hyper-parameters fixed.
			The results are shown in Table \ref{table:L}.
			We find that the model performs poorly when $L=4$, which is because a larger $L$ will mix too many entity embeddings in a given entity, which \textit{over-smoothes} the representation learning on KGs.
			KGNN-LS achieves the best performance when $L = 1$ or $2$ in the four datasets.
			
			\begin{table}[t]
				\centering
				\setlength{\tabcolsep}{8pt}
				\begin{tabular}{c|cccc}
					\hline
					$L$ & 1 & 2 & 3 & 4\\
					\hline
					MovieLens-20M & \textbf{0.155} & 0.146 & 0.122 & 0.011 \\
					Book-Crossing & 0.077 & \textbf{0.082} & 0.043 & 0.008 \\
					Last.FM & \textbf{0.122} & 0.106 & 0.105 & 0.057 \\
					Dianping-Food & 0.165 & \textbf{0.170} & 0.061 & 0.036 \\
					\hline
				\end{tabular}
				\vspace{0.05in}
				\caption{$R@10$ w.r.t. the number of layers $L$.}
				\label{table:L}
				\vspace{-0.2in}
			\end{table}
			
			We also examine the impact of the dimension of hidden layers $d$ on the performance of KGNN-LS.
			The result in shown in Table \ref{table:d}.
			We observe that the performance is boosted with the increase of $d$ at the beginning, because more bits in hidden layers can improve the model capacity.
			However, the performance drops when $d$ further increases, since a too large dimension may overfit datasets.
			The best performance is achieved when $d = 8 \sim 64$.
			
			\begin{table}[t]
				\centering
				\setlength{\tabcolsep}{4pt}
				\begin{tabular}{c|cccccc}
					\hline
					$d$ & 4 & 8 & 16 & 32 & 64 & 128\\
					\hline
					MovieLens-20M & 0.134 & 0.141 & 0.143 & \textbf{0.155} & \textbf{0.155} & 0.151 \\
					Book-Crossing & 0.065 & 0.073 & 0.077 & 0.081 & \textbf{0.082} & 0.080 \\
					Last.FM & 0.111 & 0.116 & \textbf{0.122} & 0.109 & 0.102 & 0.107 \\
					Dianping-Food & 0.155 & \textbf{0.170} & 0.167 & 0.166 & 0.163 & 0.161 \\
					\hline
				\end{tabular}
				\vspace{0.05in}
				\caption{$R@10$ w.r.t. the dimension of hidden layers $d$.}
				\label{table:d}
				\vspace{-0.2in}
			\end{table}

	\subsection{Running Time Analysis}
		We also investigate the running time of our method with respect to the size of KG.
		We run experiments on a Microsoft Azure virtual machine with 1 NVIDIA Tesla M60 GPU, 12 Intel Xeon CPUs (E5-2690 v3 @2.60GHz), and 128GB of RAM.
		The size of the KG is increased by up to five times the original one by extracting more triples from Satori, and the running times of all methods on MovieLens-20M are reported in Figure \ref{fig:scalability}.
		Note that the trend of a curve matters more than the real values, since the values are largely dependent on the minibatch size and the number of epochs (yet we did try to align the configurations of all methods).
		The result show that KGNN-LS exhibits strong scalability even when the KG is large.

\section{Conclusion and Future Work}
	In this paper, we propose knowledge-aware graph neural networks with label smoothness regularization for recommendation.
	KGNN-LS applies GNN architecture to KGs by using user-specific relation scoring functions and aggregating neighborhood information with different weights.	In addition, the proposed label smoothness constraint and leave-one-out loss provide strong regularization for learning the edge weights in KGs.
	We also discuss how KGs benefit recommender systems and how label smoothness can assist learning the edge weights.
	Experiment results show that KGNN-LS outperforms state-of-the-art baselines in four recommendation scenarios and achieves desirable scalability with respect to KG size.
	
	In this paper, LS regularization is proposed for recommendation task with KGs.
	It is interesting to examine the LS assumption on other graph tasks such as link prediction and node classification.
	Investigating the theoretical relationship between feature propagation and label propagation is also a promising direction.

\vspace{1em}
\noindent \textbf{Acknowledgements}.
This research has been supported in part by NSF OAC-1835598, DARPA MCS, ARO MURI, Boeing, Docomo, Hitachi, Huawei, JD, Siemens, and Stanford Data Science Initiative.

\bibliographystyle{ACM-Reference-Format}
\bibliography{reference}

\newpage
\section*{Appendix}
	\subsection*{A \ \ \ Additional Details on Datasets}
		MovieLens-20M, Book-Crossing, and Last.FM dataset contain explicit feedbacks data (Last.FM provides the listening count as weight for each user-item interaction).
		Therefore, we transform them into implicit feedback, where each entry is marked with 1 indicating that the user has rated the item positively.
		The threshold of positive rating is 4 for MovieLens-20M, while no threshold is set for Book-Crossing and Last.FM due to their sparsity.
		Additionally, we randomly sample an unwatched set of items and mark them as 0 for each user, the number of which equals his/her positively-rated ones.
		
		We use Microsoft Satori to construct the KGs for MovieLens-20M, Book-Crossing, and Last.FM dataset.
		In one triple in Satori KG, the head and tail are either IDs or textual content, and the relation is with the form ``domain.head\_category.tail\_category'' (e.g., ``book.book.author'').
		We first select a subset of triples from the whole Satori KG with a confidence level greater than 0.9.
		Given the sub-KG, we collect Satori IDs of all valid movies/books/musicians by matching their names with tail of triples \textit{(head, film.film.name, tail)}, \textit{(head, book.book.title, tail)}, or \textit{(head, type.object.name, tail)}, for the three datasets.
		Items with multiple matched or no matched entities are excluded for simplicity.
		After having the set of item IDs, we match these item IDs with the head of all triples in Satori sub-KG, and select all well-matched triples as the final KG for each dataset.
		
		Dianping-Food dataset is collected from Dianping.com, a Chinese group buying website hosting consumer reviews of restaurants similar to Yelp.
		We select approximately 10 million interactions between users and restaurants in Dianping.com from May 1, 2015 to December 12, 2018.
		The types of positive interactions include clicking, buying, and adding to favorites, and we sample negative interactions for each user.
		The KG for Dianping-Food is collected from Meituan Brain, an internal knowledge graph built for dining and entertainment by Meituan-Dianping Group.
		The types of entities include POI (restaurant), city, first-level and second-level category, star, business area, dish, and tag;
		The types of relations correspond to the types of entities (e.g., ``organization.POI.has\_dish'').

	\subsection*{B \ \ \ Additional Details on Hyper-parameter Searching}
		In KGNN-LS, we set functions $g$ and $f$ as inner product, $\sigma$ as \textit{ReLU} for non-last-layers and $tanh$ for the last-layer.
		Note that the size of neighbors of an entity in a KG may vary significantly over the KG.
		To keep the computation more efficient, we uniformly sample a fixed-size set of neighbors for each entity instead of using its full set of neighbors.
		The number of sampled neighbors for each entity is denoted by $S$.
		Hyper-parameter settings are given in Table \ref{table:hp}, which are determined by optimizing $R@10$ on a validation set.
		The search spaces for hyper-parameters are as follows:
		\begin{itemize}
			\item $S = \{2, 4, 8, 16, 32\}$;
			\item $d = \{4, 8, 16, 32, 64, 128\}$;
			\item $L = \{1, 2, 3, 4\}$;
			\item $\lambda = \{0, 0.01, 0.1, 0.5, 1, 5\}$;
			\item $\gamma = \{ 10^{-9}, 10^{-8}, 10^{-7}, 2 \times 10^{-7}, 5 \times 10^{-7}, 10^{-6}, 2 \times 10^{-6}, 5 \times 10^{-6}, 10^{-5}, 2 \times 10^{-5}, 5 \times 10^{-5}, 10^{-4}, 2 \times 10^{-4}, 5 \times 10^{-4}, 10^{-3} \}$;
			\item $\eta = \{10^{-5}, 2 \times 10^{-5}, 5 \times 10^{-5}, 10^{-4}, 2 \times 10^{-4}, 5 \times 10^{-4}, 10^{-3}, 2 \times 10^{-3}, 5 \times 10^{-3}, 10^{-2}, 2 \times 10^{-2}, 5 \times 10^{-2}, 10^{-1}\}$.
		\end{itemize}
		For each dataset, the ratio of training, validation, and test set is $6:2:2$.
		Each experiment is repeated $5$ times, and the average performance is reported.
		All trainable parameters are optimized by Adam algorithm.
		The code of KGNN-LS is implemented with Python 3.6, TensorFlow 1.12.0, and NumPy 1.14.3.
		
		\begin{table}[t]
			\centering
			\begin{tabular}{c|cccc}
				\hline
				& Movie & Book & Music & Restaurant\\
				\hline
				$S$ & 16 & 8 & 8 & 4\\
				$d$ & 32 & 64 & 16 & 8\\
				$L$ & 1 & 2 & 1 & 2\\
				$\lambda$ & 1.0 & 0.5 & 0.1 & 0.5 \\
				$\gamma$ & $10^{-7}$ & $2 \times 10^{-5}$ & $10^{-4}$ & $10^{-7}$\\
				$\eta$ & $2 \times 10^{-2}$ & $2 \times 10^{-4}$ & $5 \times 10^{-4}$ & $2 \times 10^{-2}$\\
				\hline
			\end{tabular}
			\vspace{0.05in}
			\caption{Hyper-parameter settings for the four datasets ($S$: number of sampled neighbors for each entity; $d$: dimension of hidden layers, $L$: number of layers, $\lambda$: label smoothness regularizer weight, $\gamma$: L2 regularizer weight, $\eta$: learning rate).}
			\label{table:hp}
			\vspace{-0.2in}
		\end{table}

\end{document}